\documentclass[twoside]{article}

%
\usepackage[accepted]{aistats2020}
%


\usepackage[T1]{fontenc}

\usepackage[round]{natbib}

\bibliographystyle{plainnat}

\usepackage{amsmath,bm,bbm,mathtools}
\usepackage{amsthm}
\usepackage[autostyle]{csquotes}
\usepackage{subcaption,multirow}
\usepackage{amssymb,pifont}
\usepackage{hyperref}       
\usepackage{url}            
\usepackage[scr=boondox]{mathalfa}
\usepackage{booktabs}       
\usepackage{amsfonts}       
\usepackage{nicefrac}       
\usepackage{microtype}      
\usepackage[colorinlistoftodos]{todonotes}
\usepackage{soul}
\usepackage{xcolor,colortbl}
\usepackage{booktabs}
\usepackage{algorithm}
\usepackage{algorithmic}

\definecolor{anti-flashwhite}{rgb}{0.95, 0.95, 0.96}
\definecolor{antiquewhite}{rgb}{0.98, 0.92, 0.84}

\newcolumntype{g}{>{\columncolor{gray}}c}
\newcolumntype{w}{>{\columncolor{white}}c}
\newcolumntype{d}{>{\columncolor{anti-flashwhite}}c}
\newcolumntype{a}{>{\columncolor{antiquewhite}}c}

%
%
%
%
%









\def\eqref#1{equation~\ref{#1}}









\def\1{\bm{1}}










\DeclareMathAlphabet{\mathsfit}{\encodingdefault}{\sfdefault}{m}{sl}
\SetMathAlphabet{\mathsfit}{bold}{\encodingdefault}{\sfdefault}{bx}{n}












\newcommand{\ind}{\mathbbm{1}}


\DeclareMathOperator*{\argmin}{arg\,min}


\newtheorem{theorem}{Theorem}

\newtheorem{condition}{Condition}

\begin{document}

%

%

\twocolumn[

\aistatstitle{Censored Quantile Regression Forest}

\aistatsauthor{Alexander Hanbo Li \footnotemark[1] \And Jelena Bradic}
\aistatsaddress{Alexa AI, Seattle \And  University of California San Diego}

]

\footnotetext[1]{Work mostly done at University of California San Diego}

\begin{abstract}
  Random forests are powerful non-parametric regression method but are severely limited in their usage in the presence of randomly censored observations, and naively applied can exhibit poor predictive performance due to the incurred biases. Based on a local adaptive representation of random forests, we develop its regression adjustment for randomly censored regression quantile models. Regression adjustment is based on a new estimating equation that adapts to censoring and leads to quantile score whenever the data do not exhibit censoring. The proposed procedure named {\it censored quantile regression forest}, allows us to estimate quantiles of time-to-event without any parametric modeling assumption. We establish its consistency under mild model specifications. Numerical studies showcase a clear advantage of the proposed procedure.
\end{abstract}

\section{Introduction}
\label{sec:intro}

Censored data exists in many different areas. In economics, policies such as minimum wage and minimum transaction fee result in left-censored data. In biomedical study, researchers cannot always observe the time until the occurrence of an event of interest because the time span of the study is limited or the patient withdraws from the experiment, resulting in right-censored data.

Classical statistical approaches always assume an underlying model like accelerated failure time model \citep{koul1981regression,robins1992semiparametric,robins1992estimation,wei1992accelerated,zeng2007efficient,huang2007least}. These methods perform well when the model is correctly specified, but quickly break down when the model assumption is wrong or the error distribution is heteroscedastic. Other non-parametric methods \citep{doi:10.1093/biomet/68.2.381,doi:10.1093/biomet/85.4.809} and rank-based methods \citep{jin2003rank} strive to achieve assumption-lean modeling of the mean. Forest algorithms \citep{breiman2001random,geurts2006extremely,meinshausen2006quantile,athey2019generalized} are non-parametric and allow for flexible modeling of covariate interactions. However, it is non-trivial to adapt forest algorithms to censored data. Random survival forests \citep{ishwaran2008random} or bagging survival trees \citep{hothorn2004bagging,hothorn2005survival} rely on building survival trees using survival function as the splitting criterion, and are only applicable to right-censored data. Moreover, any technique developed for uncensored data cannot be easily extended to censoring scenario. For example, generalized random forest can effectively deal with heteroscedastic data, but applying the same technique to heteroscedastic and censored data is non-trivial. 

In this paper, we propose a novel method that connects the quantile forest algorithms to the censored data problem. This is done by a carefully designed estimating equation. In this way, any technique developed for quantile forests on uncensored data can be seamlessly applied to censored data. One of the promising applications of the introduced method is in the estimation of heterogeneous treatment effects when the response variable is censored. We will show in the experiments that the introduced methods do achieve the best performance on both simulated and real datasets. Especially on one heteroscedastic data, the proposed method is the only working solution among many other forest algorithms, including random survival forest. The proposed method, \textit{censored quantile regression forest}, is motivated by the observation that random forests actually define a local similarity metric \citep{lin2006random,li2017forest,athey2019generalized} which is essentially a data-driven kernel. Using this kernel, random forests can be rephrased as locally weighted regressions. We will review the regression adjustments for forests in Section \ref{sec:adjustments}.

\subsection{Related Work}

In the case of right censoring, most non-parametric recursive partitioning algorithms rely on survival tree or its ensembles. \cite{ishwaran2008random} proposed random survival forest (RSF) algorithm in which each tree is built by maximizing the between-node log-rank statistic. However, it is not directly estimating the conditional quantiles but instead estimating the cumulative hazard. \cite{zhu2012recursively} proposed the recursively imputed survival trees (RIST) algorithm with the same splitting criterion for each individual tree but different ensemble scheme. Other similar methods relying on different kinds of survival trees were proposed in \cite{gordon1985tree}, \cite{segal1988regression}, \cite{davis1989exponential}, \cite{leblanc1992relative} and \cite{leblanc1993survival}. All these methods as mentioned above use splitting rules specifically designed for the right censored data, and they all rely on the proportional hazard assumption and cannot reduce to a loss-based method that might ordinarily be used in the situation with no censoring. \cite{molinaro2004tree} proposed a tree method based on the inverse probability censoring \citep{robins1994estimation} weighted (IPCW) loss function which reduces to the full data loss function used by CART in the absence of censoring. \cite{hothorn2005survival} then extended the IPCW idea and proposed a forest-type method in which each tree is trained on resampled observations according to inverse probability censoring weights. However, the censored data always get weights zero and hence only uncensored observations will be resampled. As pointed out by \cite{robins1994estimation}, the inverse probability weighted estimators are inefficient because of their failure to utilize all the information available on observations with missing or partially missing data. 
\section{Regression Adjustments for Forests}
\label{sec:adjustments}
We will briefly review random forest and generalized forest in this section and show that they can be written as weighted regression problems. We also introduce \textit{``forest weights"} that is an essential concept in this paper.

\paragraph{Random Forest}
\label{ssec:rf_weights}

Let $\theta$ denote the random parameter determining how a tree is grown, and $\{(X_i, Y_i): i=1,\ldots,n\} \in \mathcal{X} \times \mathcal{Y} \subset \mathbb{R}^p \times \mathbb{R}$ denote the training data. For each tree $T(\theta)$, let $R_l$ denotes its $l$-th terminal leaf. We let the index of the leaf that contains $x$ to be $l(x;\theta)$.

As shown in \citet{meinshausen2006quantile}, for any single tree $T(\theta)$, the prediction on $x$ can be written as $\sum_{i=1}^n w(X_i, x; \theta) Y_i$ where $w(X_i, x; \theta) = \ind_{\{ X_i \in R_{l(x;\theta)} \}} / \# \{ j: X_j \in R_{l(x;\theta)} \}$. Then  a random forest containing $m$ trees formulates a  prediction of $\mathbb{E}[Y|X=x]$ as $\sum_{i=1}^n w(X_i, x) Y_i$ where
\begin{equation} \label{eq:rf_weight}
    w(X_i,x) = \frac{1}{m} \sum_{t=1}^m w(X_i, x; \theta_t).
\end{equation}
From now on, we call the weight $w(X_i,x)$ in \eqref{eq:rf_weight} as \textit{random forest weight}. The above representation of the random forest prediction of the mean can be equivalently obtained as a solution to the least-squares optimization problem $\min_{\lambda \in \mathbb{R}} \sum_{i=1}^n w(X_i,x) (Y_i - \lambda)^2$. Therefore, a least-squares regression adjustment, as the above, is equivalent to  \cite{breiman2001random} representation of random forests. However, when we move to estimation quantities that are not the mean, the latter representation is very powerful. Namely, a quantile random forest of \cite{meinshausen2006quantile} can be seen as a quantile regression adjustment \citep{li2017forest}, i.e., as a solution to the following optimization problem
\[
\min_{\lambda \in \mathbb{R}} \sum_{i=1}^n w(X_i, x) \rho_{\tau}(Y_i - \lambda),
\]
where $\rho_\tau$ is the $\tau$-th quantile loss function, defined as $\rho_{\tau}(u) = u(\tau - \ind(u < 0))$.
Local linear regression adjustment was also recently utilized in \cite{athey2019generalized} to obtain a smoother and more poweful generalized forest algorithm.

\paragraph{Generalized Random Forests}
\label{subsec:grf}

\cite{athey2019generalized} proposed to generalize random forest using a more sophisticated splitting criterion which is model-free. The new criterion aims to maximize the in-sample heterogeneity, formally defined as
\begin{align*}
    \tilde{\Delta}(C_1, C_2) = \sum_{j=1}^2 \frac{1}{|\{i:X_i \in C_j\}|} \left( \sum_{\{i:X_i \in C_j\}} \rho_i \right)^2
\end{align*}
where $\rho_i$ is a pseudo-response defined similarly as in Gradient Boosting \citep{friedman2001greedy}. Note that in the original random forest, we simply have $\rho_i = Y_i - \bar{Y}_P$ where $\bar{Y}_P$ is the mean response in the parent node. The generalized random forest, while applied to quantile regression problem, can deal with \textit{heteroscedasticity} because the splitting rule directly targets changes in the quantiles of the $Y$-distribution.

Just like the random forest algorithm, the generalized random forest is also an ensemble of trees and hence defines a weight or similarity between two samples using \eqref{eq:rf_weight}. The main difference hence lies in how they split the samples into different terminal regions. Therefore, in the following sections, whenever we refer to \textit{forest weight}, it can be calculated from either random forest or generalized random forest. In the experiment section, we will distinguish them by \textit{RF-weights} and \textit{GRF-weights}.

\section{Censored Quantile Regression Forest}
\label{sec:meth}
The forest regressions cannot be directly applied to censored data $\{(X_i, Y_i)\}$ because the conditional quantile of $Y$ is different from the quantiles of the latent variable $T$ due to the censoring. Moreover, there is no explicitly defined quantile loss function for randomly censored data. In this section, we design a new approach to achieve both tasks. We will motivate and derive our method using sections \ref{subsec:meth1} to \ref{subsec:meth2}.

\subsection{No Censoring and Locally Invariate $T_i$}
\label{subsec:meth1}
\paragraph{Assumptions:} We first assume \textbf{1}. there is no censoring on the data, and \textbf{2}. the latent variable $T_i$ has the same conditional probability in a neighborhood $R_x$ of $x$.

Following the regression adjustment reasoning, we could estimate the $\tau$-th quantile of $T_i$ at $x$ as
\[
q_{\tau, x} = \argmin_{q \in \mathbb{R}} \sum_{i=1}^n w(X_i, x) \rho_{\tau}(T_i - q).
\]
The above optimization problem has the following estimating equation
\begin{align} \label{eq:fop_latent}
U_n(q; x) &=(1-\tau) - \sum_{i=1}^n w(X_i, x) \ind(T_i > q) \approx 0.
\end{align}

Now out of the $n$ data points, assume $\{X_1, \cdots, X_k\} \subset R_x$ and $w(X_i, x) = k^{-1} \ind \{X_i \in R_x\}$. Because of the assumption 1, the estimating equation becomes
\begin{eqnarray} \label{eq:fop_approx}
U_k(q) &= (1-\tau) - \frac{1}{k} \sum_{i=1}^k \ind (T_i > q) \approx 0.
\end{eqnarray}
Now conditional on $\{x\} \cup \{X_i\}_{i=1}^k$, we have the expected estimating equation
\begin{equation*}
\mathbb{E} \left[ U_k(q) | x, X_i, i=1,\ldots,k \right] = (1-\tau) - \mathbb{P}(T > q | x)
\end{equation*}
which will be zero at $q^*$ where $\mathbb{P}(T > q^* | x) = 1-\tau$, that is, at the true $\tau$th quantile at $x$.

\subsection{With Censoring and Locally Invariate $T_i$}
Let's now consider the case that we could only observe $Y_i = \min\{T_i, C_i\}$ and the censoring indicator $\delta_i = \ind (T_i \le C_i)$. Note that the following analysis extends straightforwardly to left censoring. In order to estimate $q_{\tau,x}$, we cannot simply replace $T_i$ with $Y_i$ in \eqref{eq:fop_approx} as the $\tau$-th quantile of $T_i$ is no longer the $\tau$-th quantile of $Y_i$. However, because $C_i$ and $T_i$ are conditionally independent, we have the following relation:
\begin{align*}
\mathbb{P}(Y_i > q_{\tau,x}|x) &= \mathbb{P}(T_i > q_{\tau,x}|x) \mathbb{P}(C_i > q_{\tau,x}|x) \\
& = (1-\tau) G(q_{\tau,x}|x)
\end{align*}
where $G(u|x)$ is the conditional survival function of $C_i$ at $x$. That is to say, the $\tau$-th quantile of $T_i$ is actually the $1 - (1-\tau)G(q_{\tau,x}|x)$-th quantile of $Y_i$ at $x$. Now, if we define a new estimating equation that resembles  \eqref{eq:fop_approx} as follows
\begin{equation} \label{eq:fop_Y_approx}
S^o_k(q; x) = (1-\tau)G(q|x) - \frac{1}{k}\sum_{i=1}^k  \ind (Y_i > q)  \approx 0,
\end{equation}
we still have $\mathbb{E}[S_k^o(q_{\tau,x})|x] = 0$. An intuitive explanation for using \eqref{eq:fop_Y_approx} is that because  the $\tau$-th quantile of $T_i$ is just the $1 - (1-\tau)G(q_{\tau,x}|x)$-th quantile of $Y_i$ at $x$, instead of estimating the former which is not available because of the censoring, we could just estimate the later one. 

The survival function $G(\cdot|x)$ can be estimated by any consistent estimator, for example, the Kaplan-Meier estimator $\hat{G}(\cdot|x)$ using $\{Y_i\}_{i=1}^k$ and $\{\delta_i\}_{i=1}^k$, and we  can then solve for 
\begin{equation} \label{eq:fop_Y_approx_KM}
S_k(q; x) =  (1-\tau)\hat{G}(q|x) - \frac{1}{k}\sum_{i=1}^k \ind (Y_i > q) \approx 0.
\end{equation}

\subsection{Full Model}
\label{subsec:meth2}
In the previous section, we assume that $\mathbb{P}(T|X) = \mathbb{P}(T|x)$ for all $X \in R_x$. But in reality, this assumption is not always true, and that is why $w(X_i, x)$ plays an important rule in our final estimator, as it ``corrects" the empirical probability of each $T_i$ at $x$.
Intuitively, if $X_i$ is more similar to $x$ than $X_j$, then $Y_i$ should play a more important role than $Y_j$ on estimating the quantile at $x$. 
Now let $w(X_i,x)$ denote a similarity measure between $X_i$ and $x$. In order for $\sum_{i=1}^n w(X_i, x) \ind(T_i \le q)$ to be a proper estimation of $\mathbb{P}(T\le q|x)$, it needs to satisfy two conditions:
\begin{align*} \label{eq:conditions}
&(1) \; \sum_{i=1}^n w(X_i, x) = 1; \\
&(2) \; \sum_{i=1}^n w(X_i, x) \ind(T_i \le q) \overset{p}{\to} \mathbb{P}(T \le q | x) \; \forall q.
\end{align*}
One may think that any fixed Kernel weights, $K(X_i, x)$, could be a suitable choice, but in fact they would not be able to satisfy the second condition for every distribution $\mathbb{P}(T | x)$. Fortunately, as shown in \cite{meinshausen2006quantile} and \cite{athey2019generalized}, the data-adaptive (generalized) random forest weight $w(X_i, x)$ perfectly satisfies both conditions. Therefore if we define
\begin{equation}
U_n(q_{\tau,x}) = (1-\tau) - \sum_{i=1}^n w(X_i, x) \ind(T_i > q_{\tau,x}),
\end{equation}
we have $U_n(q_{\tau,x}) \overset{p}{\to} 0$ asymptotically. Then following the same logic of how we get \eqref{eq:fop_Y_approx_KM}, a general case estimating equation for censoring data will be
\begin{align} \label{eq:final_fop}
S_n(q; x) = (1-\tau)\hat{G}(q|x) - \sum_{i=1}^n w(X_i, x)\ind (Y_i > q) \approx 0.
\end{align}

\subsection{Estimators for $G(q|x)$}
\label{subsec:meth3}

Many consistent estimators for the conditional survival functions exist. For example, the nonparametric estimator \citep{beran1981nonparametric}
\begin{equation} \label{eq:Beran}
\tilde{G}(q|x) = \prod_{Y_i \le q} \left\{ 1 - \frac{W_i(x, a_n)}{\sum_{j=1}^n \ind(Y_j \ge Y_i) W_j(x,a_n)} \right\}^{1-\delta_i}
\end{equation}
is shown to be consistent \citep{beran1981nonparametric,dabrowska1987non,dabrowska1989uniform,gonzalez1994asymptotic,akritas1994nearest,li1995approach,van1996uniform}. Here, $W_i(x,a_n)$ is the Nadaraya-Watson weight.
However, since we already have an adaptive version of kernel -- the forest weights $w(X_i, x)$, we propose the following two  new estimators for $G(q|x)$:

\paragraph{Kaplan-Meier using nearest neighbors.}
We first find the $k$ nearest neighbors of $x$ according to the magnitude of the weights $w(X_i, x)$, and denote these points as a set $N_x$. Then we define the Kaplan-Meier estimator on $N_x$ as
\begin{align} \label{eq:KM_kNN}
\prod_{i:X_i \in N_x, Y_i\le q} \left( 1 - \frac{1}{\sum_{j=1}^n \ind(Y_j \ge Y_i) \ind(X_j \in N_x)} \right)^{1-\delta_i}.
\end{align}
Here, the number of nearest neighbors $k$ will be a tuning hyperparameter.

\paragraph{Beran estimator with forest weights.}
We replace the Nadaraya-Watson weights in \eqref{eq:Beran} with the forest weights $w(X_i, x)$:
\begin{equation} \label{eq:Beran_rf}
\hat{G}(q|x) = \prod_{Y_i \le q} \left\{ 1 - \frac{w(X_i, x)}{\sum_{j=1}^n \ind(Y_j \ge Y_i) w(X_j, x)} \right\}^{1-\delta_i}.
\end{equation}
In fact \eqref{eq:KM_kNN} is a special case of \eqref{eq:Beran_rf} when the weight $w(X_i, x) = 1/k$ for $X_i \in R_x$ and $0$ otherwise.

\subsection{Algorithm}
We summarize our algorithm in Algorithm \ref{alg:main}. The details for choosing the candidate set $\mathcal{C}$ is in Section \ref{subsec:alg}.

\begin{algorithm}
\caption{Censored quantile regression forest}
\label{alg:main}
\begin{algorithmic}[2]

\STATE{\textbf{Input:} number of trees: $B$, minimum node size: $m$, the number of nearest neighbors: $k$ (if using \eqref{eq:KM_kNN}), test set $\mathcal{A}$, training set $ \mathcal{D}=\{(X_i, Y_i, \delta_i)\}_{i=1}^n$, quantile $\tau$}

\STATE \textbf{Step 0:} Train a forest on $(\mathcal{D})$ with $B$ trees and minimum node size $m$.

\FOR{$x \in \mathcal{A}$}
\STATE \textbf{Step 1:} Calculate forest weights $w(x, X_i$).

\STATE \textbf{Step 2:} Calculate the survival function estimate $\hat{G}(q|x)$.

\STATE \textbf{Step 3:} Get the quantile estimation:
$$\hat q(x) \gets \argmin_{q \in \mathcal{C}} |S_n(q; x)|$$

\COMMENT{$\mathcal{C}$ is a candidate set as discussed in Section \ref{subsec:alg}}
\COMMENT{$S_n$ is defined in \eqref{eq:final_fop}} \label{step:4}
\ENDFOR

\end{algorithmic}
\end{algorithm}
\section{Theoretical Develoments}
\label{sec:theory}

In this section, we will show the consistency of the proposed quantile estimator. The time complexity analysis is in the Appendix.


\subsection{Consistency}
\label{subsec:consistency}
The consistency of random forest has been extensively studied \citep{arlot2014analysis,athey2019generalized,biau2008consistency,biau2010layered,biau2012analysis,denil2014narrowing,lin2006random,scornet2015consistency,wager2015adaptive,wager2018estimation}. Following the common settings, we also assume the covariate space $\mathcal{X} = [0,1]^p$ and the parameter $q \in \mathcal{B} \subset \mathbb{R}$ where $\mathcal{B}$ is a compact subset of $\mathbb{R}$. In our case, since $q$ stands for the quantile, the assumption means that there exists some $r > 0$ such that $q \in [-r, r] = \mathcal{B}$. We also make another standard assumption that the density of $X$ is bounded away from $0$ and $\infty$. Note that since $\mathcal{X}$ is a compact support, the density condition holds true for Gaussian distribution and more broadly any symmetric and continuous distribution with unbounded support.

\begin{condition}[Lipschitz in $x$]\label{cond:3}
Denote $F(y|x) = \mathbb{P}(Y \le y | x)$. There exists a constant $L$ such that $F(y|x)$ is Lipschitz continuous with parameter $L$, that is, for all $x, x^{'} \in \mathcal{X}$,
\begin{equation*}
\sup_{y} |F(y|x) - F(y|x^{'})| \le L \|x - x^{'}\|_1.
\end{equation*}
\end{condition}

This Condition \ref{cond:3} appears in all existing work related to quantile regression and inference thereafter.  

\begin{condition}[Identification]\label{cond:4}
For any fixed $x$, the latent variable $T$ and the censoring variable $C$ are conditionally independent, and the conditional distribution $\mathbb{P}(T \le q|x)$ and $\mathbb{P}(C \le q|x)$ are both strictly increasing in $q$.
\end{condition}

Conditional independence of $T$ and $C$ is a very standard assumption and can be traced back to \cite{robins1992semiparametric} among other works.

\begin{condition}[Tree splitting]\label{cond:2}
For each tree splitting, the probability that each variable is chosen for the split point is bounded from below by a positive constant, and every child node contains at least $\gamma$ proportion of the data in the parent node, for some $\gamma \in (0, 0.5]$. [Quantile forest \citep{meinshausen2006quantile}] The terminal node size $m \to \infty$ and $m/n \to 0$ as $n \to \infty$. [Generalized forest \citep{athey2019generalized}] The forest is honest and built via subsampling with subsample size $s$ satisfying $s/n \to 0$ and $s \to \infty$.
\end{condition}

The first two requirements of Condition \ref{cond:2} are shared in \cite{meinshausen2006quantile} and \cite{athey2019generalized}. For quantile random forest \citep{meinshausen2006quantile}, they require that the leaf node size of each tree should increase with the sample size $n$, but at a slower rate. Our experiments also justify that the required leaf node size of Meinshausen's quantile forest is larger than the node size of the generalized forest. In general, using the generalized forest weights give us more stable estimations because the trees are honest and regular \citep{wager2018estimation}.

\begin{condition}[Censoring variable]\label{cond:5}
For any $x \in \mathcal{X}$, $\hat{G}(q|x)$ is a uniformly consistent estimator of the true conditional survival function $G(q|x)$ for $q \in \mathcal{B}$.
\end{condition}

Condition \ref{cond:5} is satisfied, for example, by the Kaplan-Meier estimator \eqref{eq:Beran} \citep{dabrowska1989uniform}. Please take a look at Figure \ref{fig:g_comparison} where we compare finite sample properties of the newly introduced estimators \eqref{eq:KM_kNN} and \eqref{eq:Beran_rf}. We observe that the new distributional estimators are more adaptive and yet seemingly inherit consistency to that of the traditional KM estimator.

We proceed to showcase asymptotic properties of the proposed estimating equations. We begin by illustrating a concentration of measure phenomenon for the introduced score equations.

\begin{theorem}
\label{thm:part1}
Define
\begin{equation}
S(q;\tau) = (1-\tau)G(q|x) - \mathbb{P}(Y > q).
\end{equation}
Under Conditions \ref{cond:3} -- \ref{cond:5}, for any $x \in \mathcal{X}$, $r > 0$ and $\tau \in (0,1)$, we have
\begin{equation*}
\sup_{q \in [-r,r]} | S_n(q; \tau) - S(q; \tau) | = o_p(1).
\end{equation*}
\end{theorem}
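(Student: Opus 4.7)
The plan is to decompose the error into a censoring-estimation piece and a weighted-empirical-distribution piece, and bound each separately. Setting $F_Y(q\mid x) := \mathbb{P}(Y > q\mid x)$ and $\hat F^w_Y(q) := \sum_{i=1}^n w(X_i,x)\,\ind(Y_i > q)$, I write
$$S_n(q;\tau) - S(q;\tau) \;=\; (1-\tau)\bigl[\hat G(q\mid x) - G(q\mid x)\bigr] \;-\; \bigl[\hat F^w_Y(q) - F_Y(q\mid x)\bigr].$$
By the triangle inequality, uniform convergence of each summand over $q \in [-r,r]$ suffices. The first summand is handled immediately by Condition \ref{cond:5}, which states exactly the required uniform consistency of $\hat G(\cdot\mid x)$; the scalar $(1-\tau) \in (0,1)$ plays no role.

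For the second summand, I would first establish pointwise convergence in probability, $\hat F^w_Y(q) \to F_Y(q\mid x)$, for each fixed $q \in [-r,r]$. This is precisely the consistency of the forest-weighted conditional distribution estimator. Under Condition \ref{cond:2} (balanced $\gamma$-splits, lower bound on each variable's splitting probability, and the leaf-size / subsampling regime) together with the Lipschitz condition on $F(\cdot\mid x)$ in $x$ from Condition \ref{cond:3}, leaves shrink in diameter while retaining enough observations, so the weighted within-leaf empirical law of $Y$ concentrates on $\mathbb{P}(\cdot\mid x)$. This is Theorem~1 of \cite{meinshausen2006quantile} for the classical quantile-forest weights and the analogous statement in \cite{athey2019generalized} for generalized-forest weights; both are directly available under our assumptions.

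To upgrade pointwise to uniform convergence, I invoke the standard monotone-function argument. Both $\hat F^w_Y(\cdot)$ and $F_Y(\cdot\mid x)$ are non-increasing, and $F_Y(\cdot\mid x)$ is continuous on $[-r,r]$ --- a consequence of Condition \ref{cond:4}, since the strict monotonicity of $\mathbb{P}(T\le q\mid x)$ and $\mathbb{P}(C\le q\mid x)$ combined with the conditional independence of $T$ and $C$ gives
$$F_Y(q\mid x) \;=\; \bigl(1-\mathbb{P}(T\le q\mid x)\bigr)\bigl(1-\mathbb{P}(C\le q\mid x)\bigr),$$
which is continuous (atoms in the marginal CDFs would violate strict monotonicity of these survival functions on any interval around the atom). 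Given $\epsilon>0$, choose a finite grid $-r = q_0 < q_1 < \cdots < q_M = r$ with $|F_Y(q_k\mid x) - F_Y(q_{k+1}\mid x)| < \epsilon$ for every $k$; sandwiching both $\hat F^w_Y(q)$ and $F_Y(q\mid x)$ between consecutive grid values by monotonicity yields
$$\sup_{q\in[-r,r]}\bigl|\hat F^w_Y(q) - F_Y(q\mid x)\bigr| \;\le\; \max_{0\le k\le M}\bigl|\hat F^w_Y(q_k) - F_Y(q_k\mid x)\bigr| + \epsilon,$$
and the finite maximum is $o_p(1)$ by pointwise consistency. Since $\epsilon$ is arbitrary, the desired uniform convergence follows, which combined with the first summand completes the proof.

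The main obstacle is the pointwise forest-weight consistency, which requires the diameter-shrinkage behaviour of the leaves and is where Condition \ref{cond:2} does the real work; fortunately this step is already in the literature and need only be quoted. The decomposition, the invocation of Condition \ref{cond:5}, and the monotone-grid lift to uniform convergence are then routine.
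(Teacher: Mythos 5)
Your proof is correct and follows essentially the same route as the paper: pointwise consistency of the forest-weighted conditional distribution quoted from \cite{meinshausen2006quantile}/\cite{athey2019generalized}, a monotonicity-based grid argument to lift it to uniform convergence over $[-r,r]$, and Condition \ref{cond:5} to handle the $\hat G(\cdot\mid x)$ term. Your fixed $\epsilon$-grid is in fact a slightly cleaner rendering of the paper's grid of $n$ points with mesh $1/n$ followed by a Bonferroni step, and you additionally justify the continuity of $F_Y(\cdot\mid x)$ via Condition \ref{cond:4}, which the paper uses without comment.
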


Next, we present our main result that illustrates an asymptotic consistency of the proposed conditional quantile estimator. The proof is given in Appendix.

\begin{theorem}
\label{thm:consistency}
Under Conditions \ref{cond:3} -- \ref{cond:5}, for fixed $\tau \in (0,1)$ and $x \in \mathcal{X}$, define $q^*$ to be the root of $S(q;\tau) = 0$, and $r > 0$ to be some constant so that $q^* \in [-r, r]$. Also define $q_n$ to be $\argmin_{q \in [-r, r]} \left|S_n(q; \tau)\right|$. Then $\mathbb{P}(T \le q^*|x) = \tau$, and $q_n \overset{p}{\to} q^*$ as $n \to \infty$.

\end{theorem}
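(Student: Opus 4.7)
The plan is to split the statement into two claims: (i) identification, i.e.\ $\mathbb{P}(T \le q^*\mid x) = \tau$, and (ii) consistency, i.e.\ $q_n \overset{p}{\to} q^*$. Identification is purely analytic and relies only on Condition~\ref{cond:4}; consistency is a standard argmin--continuity argument that turns the uniform convergence of Theorem~\ref{thm:part1} into convergence of the minimizer.

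For (i), I would first factor the population score using conditional independence of $T$ and $C$ given $X$ (Condition~\ref{cond:4}):
\[
\mathbb{P}(Y > q \mid x) \;=\; \mathbb{P}(T > q \mid x)\,G(q\mid x),
\]
so that
\[
S(q;\tau) \;=\; G(q\mid x)\bigl[(1-\tau) - \mathbb{P}(T > q \mid x)\bigr].
\]
Strict monotonicity of $\mathbb{P}(C \le q\mid x)$ from Condition~\ref{cond:4} forces $G(q\mid x) > 0$ on $[-r,r]$, so $S(q^*;\tau) = 0$ immediately gives $\mathbb{P}(T > q^*\mid x) = 1-\tau$, which is the first claim. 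The same factorization, combined with strict monotonicity of $\mathbb{P}(T \le q\mid x)$ in $q$, simultaneously yields that $q^*$ is the \emph{unique} root of $S(\,\cdot\,;\tau)$ in $[-r,r]$; I will invoke this uniqueness below.

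For (ii), I would proceed in three short steps. From the definition of $q_n$ as a minimizer of $|S_n(\cdot;\tau)|$ over $[-r,r]$ and $S(q^*;\tau)=0$,
\[
|S_n(q_n;\tau)| \;\le\; |S_n(q^*;\tau)| \;\le\; \sup_{q\in[-r,r]} |S_n(q;\tau) - S(q;\tau)| \;=\; o_p(1)
\]
by Theorem~\ref{thm:part1}. A second application of the same uniform bound gives
\[
|S(q_n;\tau)| \;\le\; |S(q_n;\tau) - S_n(q_n;\tau)| + |S_n(q_n;\tau)| \;=\; o_p(1).
\]
Finally, $S(\,\cdot\,;\tau)$ is continuous on the compact interval $[-r,r]$ (continuity of $F(\,\cdot\,\mid x)$ and $G(\,\cdot\,\mid x)$ follows from their strict monotonicity together with Condition~\ref{cond:3}) and vanishes only at $q^*$, so the well-separation property $\inf_{q\in[-r,r],\,|q-q^*|\ge\epsilon} |S(q;\tau)| > 0$ holds for every $\epsilon > 0$. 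Combining this with $|S(q_n;\tau)| = o_p(1)$ yields $q_n \overset{p}{\to} q^*$.

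The main obstacle I anticipate is the identification step rather than the asymptotics: one must be careful that $G(q^*\mid x)$ is genuinely bounded away from zero so that dividing by it is legal, and that the uniqueness of $q^*$ really follows on all of $[-r,r]$ rather than just in a neighborhood of $q^*$. Both points are handled cleanly by the strict-monotonicity clauses of Condition~\ref{cond:4}, but the argument has to pull the factorization $S = G \cdot [(1-\tau) - \overline{F}_T]$ out of the hat \emph{before} invoking them, which is why I would structure the proof in exactly that order.
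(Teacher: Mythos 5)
Your proposal follows essentially the same route as the paper: uniform convergence from Theorem~\ref{thm:part1}, the factorization $S(q;\tau) = \bigl(\mathbb{P}(T \le q \mid x) - \tau\bigr) G(q\mid x)$ via conditional independence to get identification and well-separation, and the argmin-plus-triangle-inequality bound $|S_n(q_n;\tau)| \le |S_n(q^*;\tau) - S(q^*;\tau)| = o_p(1)$; the only structural difference is that the paper delegates the final ``uniform convergence + well-separation + near-zero score $\Rightarrow$ consistency'' step to the standard Z-estimator result in van der Vaart, whereas you spell it out, which is fine. One small quibble: your justification of the well-separation property leans on continuity of $F(\cdot\mid x)$ and $G(\cdot\mid x)$ in $q$, which you attribute to ``strict monotonicity together with Condition~\ref{cond:3}''; Condition~\ref{cond:3} is Lipschitz continuity in $x$ (uniformly over $y$), not in $y$, and strict monotonicity alone does not give continuity. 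The paper avoids this by using Condition~\ref{cond:4} directly: strict monotonicity of $\mathbb{P}(C \le q\mid x)$ gives $G(q\mid x) \ge G(r\mid x) > 0$ on $[-r,r]$, and strict monotonicity of $\mathbb{P}(T \le q\mid x)$ with $\mathbb{P}(T\le q^*\mid x)=\tau$ gives $|\mathbb{P}(T \le q\mid x) - \tau| \ge l > 0$ whenever $|q - q^*| \ge \epsilon$, so the infimum is positive without any continuity-in-$q$ argument; you should patch that sentence accordingly.
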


\section{Experiments}
\label{sec:experiments}

In this section, we will compare the proposed model, censored regression forest (\textit{crf}), with generalized random forest (\textit{grf}) \citep{athey2016generalized}, quantile random forest (\textit{qrf}) \citep{meinshausen2006quantile} and random survival forest (\textit{rsf}) \citep{hothorn2005survival} on various simulated and real datasets. On the simulated datasets, we report both censored and oracle results for \textit{qrf} and \textit{grf}. To obtain the censored result, we directly apply generalized random forest and quantile random forest to the censored data, and denote the results by \textit{grf} and \textit{qrf} respectively. For oracle result, we instead train the models using the oracle responses without censoring (i.e. $T_i$'s), and call the results \textit{grf-oracle} and \textit{qrf-oracle}.

\subsection{Simulation Study}
\label{ssec:simulation}

In this section, we denote censored regression forest with generalized forest weights as \textit{crf-generalized} and the one with (quantile) random forest weights as \textit{crf-quantile}. We first define the evaluation metric used in this section -- \textbf{quantile loss}. The $\tau$-th quantile loss is defined as follows. Let $\hat{q}^{\tau}_i$ be the estimated $\tau$-th quantile at $X_i$, then
\begin{equation}\label{eq:L_quantile}
    L_{quantile}(\hat{q}^\tau_1, \ldots, \hat{q}^\tau_n) = \frac{1}{n} \sum_{i=1}^n \rho_{\tau}(T_i - \hat{q}^{\tau}_i).
\end{equation}
We could use this metric because we know the latent responses $T_i$'s in simulations.

\subsubsection{Accelerated Failure Time Data}
In this section, we generate data from a accelerated failure time (AFT) model. We sample $n=1000$ independent and identically distributed examples where $X_i$ is uniformly distributed over $[0, 2]^p$ with $p=20$, and $T_i$ is conditional on $(X_i)_1$ and $\log(T_i|X_i) = (X_i)_1 + \epsilon$ where $\epsilon \sim \mathcal{N}(0, 0.3^2)$. The censoring variable $C_i \sim \text{Exp}(\lambda=0.08)$ and $Y_i = \min(T_i, C_i)$. This results in about 23\% censoring level. The other 19 covariates are noise. We estimate the quantiles at $\tau=0.1$ and $0.9$, and draw the predicted quantiles in Figure \ref{fig:aft_20d} for node size $20$.

\begin{figure}
    \centering
    \vspace{-0.05in}
    \includegraphics[width=0.85\linewidth]{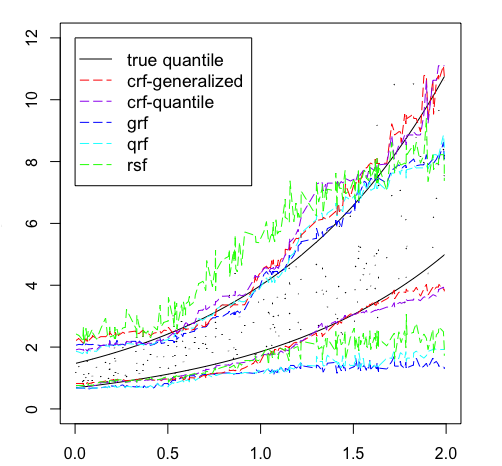}

    
    \caption{\label{fig:aft_20d}Quantile estimates on AFT model when $\tau = 0.1$ and $0.9$. The minimum node size is 20 and each forest contains 1000 trees.}
\end{figure}

From the results in Figure \ref{fig:aft_20d}, both \textit{grf} and \textit{qrf} are severely biased downwards because of the right censoring. The proposed methods \textit{crf-generalized} and \textit{crf-quantile} both provide consistent quantile estimation that is almost identical to the true quantiles. Random survival forest is very unstable on predicting the quantiles and is not able to correct the censoring bias. We increase the node size to from 20 to 80, all the methods except for \textit{rsf} become more biased and less variant, but \textit{rsf} is still volatile.

We then repeat the above experiment ten times for different node sizes ranging from 10 to 80, and report the average and standard deviation of the quantile losses in Figure \ref{fig:aft_nodesize}. We observe that the performances of both \textit{crf-generalized} and \textit{crf-quantile} are close to their corresponding oracles, and are much better than \textit{grf} or \textit{qrf} on censored data. \textit{crf-quantile} (\textit{qrf}) performs slightly better than \textit{crf-generalized} (\textit{grf}), implying that original quantile random forest can be more effective when the data is homoscedastic. The random survival forest \textit{rsf} behaves only slightly better than the biased \textit{grf} and \textit{qrf}, but is much worse than the proposed methods.

\subsubsection{Heteroscedastic Data}

\begin{figure}[!htb]
    \centering
    \includegraphics[width=0.9\linewidth]{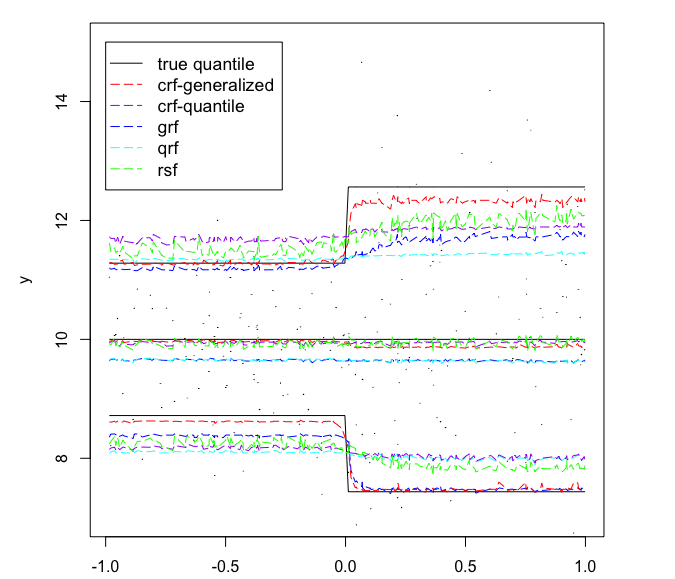}
    \caption{Quantile estimates on the censored heteroscedastic data for $\tau=0.1$, $0.5$ and $0.9$. The minimum node size for all trees is 150 and each forest contains 2000 trees.}
    \label{fig:hetero_1d}
\end{figure}

We test the proposed method on a heteroscedastic dataset. The dataset is taken from \citet{athey2019generalized}. We sample $n=2000$ independent and identically distributed examples where $X_i$ is uniformly distributed over $[-1, 1]^p$ with $p=40$, and $T_i$ is Gaussian conditionally on $(X_i)_1$ and $T_i|X_i \sim \mathcal{N}(10, (1 + \ind\{(X_i)_1 > 0\})^2)$. The censoring variable $C_i \sim 8 + \text{Exp}(\lambda=0.10)$ and $Y_i = \min(T_i, C_i)$. The other 39 covariates are noise. The censoring ratio is about 20\% in this example. We shift the mean of $T_i|X_i$ to 10 because the random survival forest only allows positive responses. We estimate the quantiles at $\tau=0.1, \, 0.5, \, 0.9$. The results are in Figure \ref{fig:hetero_1d}.

When the data is heteroscedastic, using generalized forest weights (\textit{crf-generalized}) provides much more accurate quantile estimation than all the other methods. The predicted quantiles by \textit{crf-generalized} are almost identical to the truths. Our results are inline with \citet{athey2019generalized} that generalized random forest is very effective at dealing with heteroscedasticity. Note that the random survival forest (\textit{rsf}) also fails to recognize the variance shift. This experiment indicates that our method coupled with generalized forest weights is the most, arguably the only effective method when dealing with heteroscedastic data.

We also repeat the above experiment ten times for different node sizes and report the quantile losses in Figure \ref{fig:hetero_nodesize}. We again observe that \textit{crf-generalized} achieves almost the same performance of \textit{grf-oracle}. \textit{rsf} and \textit{crf-quantile} have similar performance on this dataset, and are both slightly worse than even \textit{grf} when $\tau=0.1$. This shows that the splitting rule of random survival forest or quantile forest do not work well on heteroscedastic data.

\begin{figure*}[!h]
    \centering
    \begin{subfigure}[b]{0.3\textwidth}
        \includegraphics[width=\textwidth]{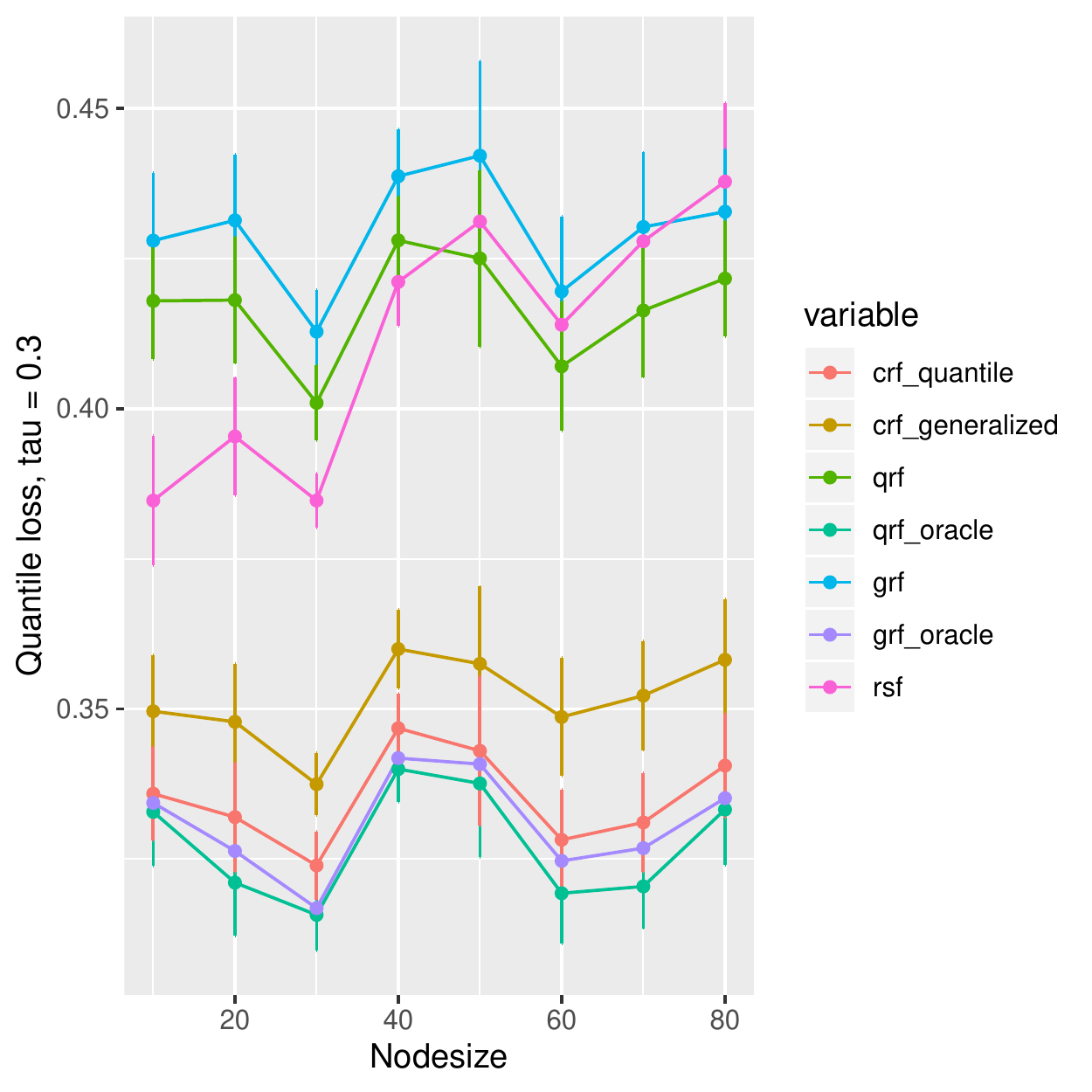}
        \caption{$\tau = 0.3$}
    \end{subfigure}
    ~
    \begin{subfigure}[b]{0.3\textwidth}
        \includegraphics[width=\textwidth]{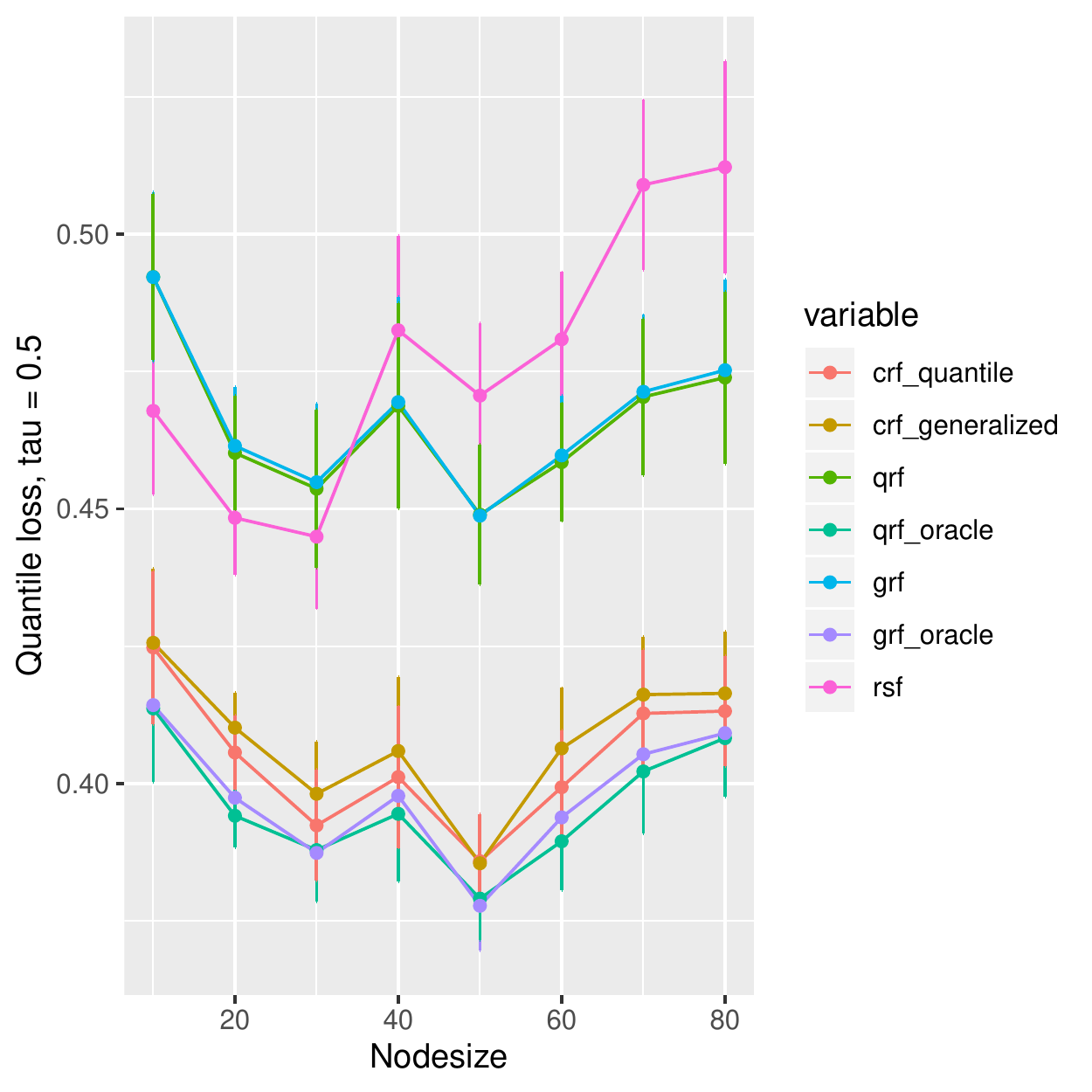}
        \caption{$\tau = 0.5$}
    \end{subfigure}
    ~
    \begin{subfigure}[b]{0.3\textwidth}
        \includegraphics[width=\textwidth]{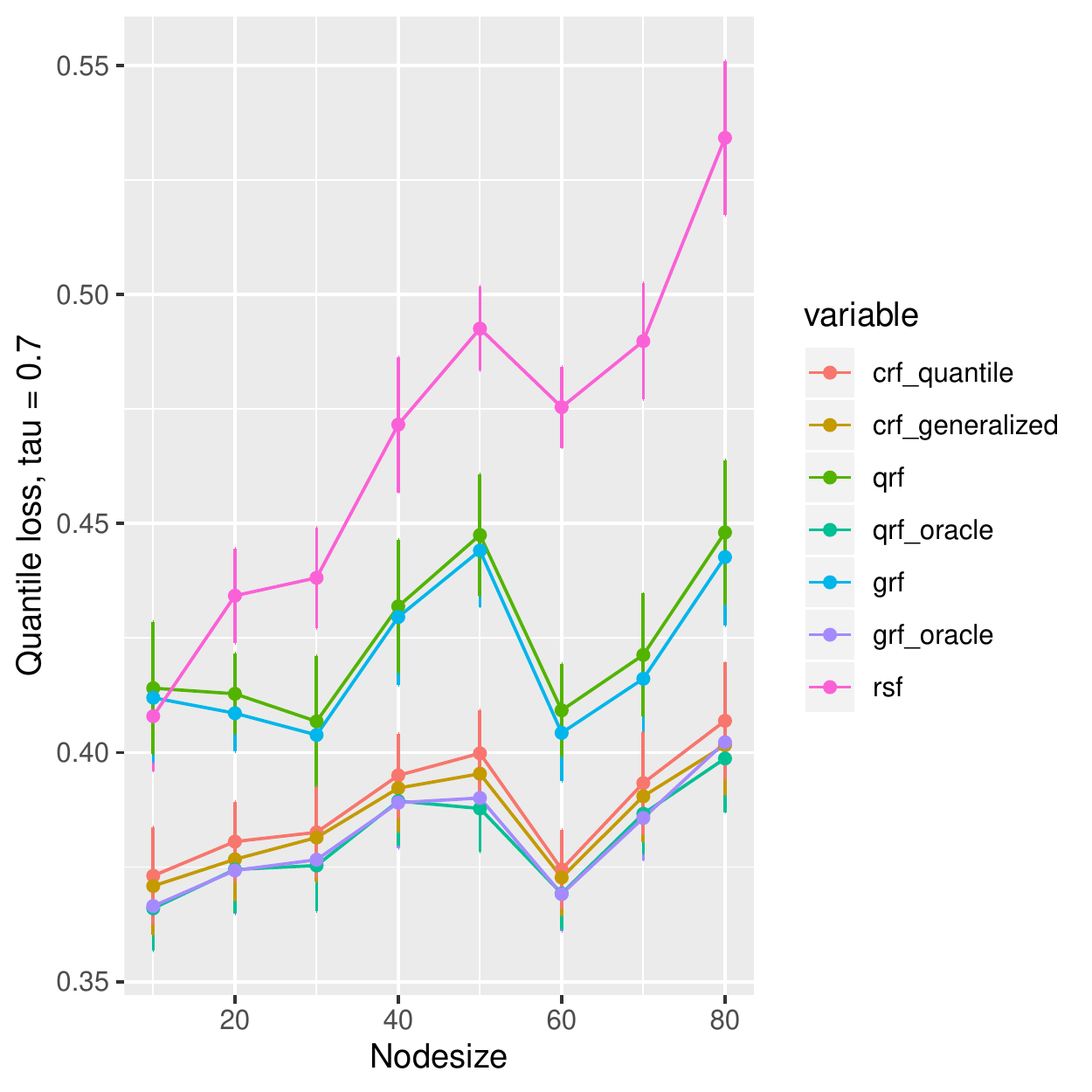}
        \caption{$\tau = 0.7$}
    \end{subfigure}
    
    \caption{Quantile losses on multi-dimensional AFT data with different node sizes.}
    \label{fig:aft_nodesize}
\end{figure*}

\begin{figure*}[!htb]
    \centering
    \begin{subfigure}[b]{0.3\textwidth}
        \includegraphics[width=\textwidth]{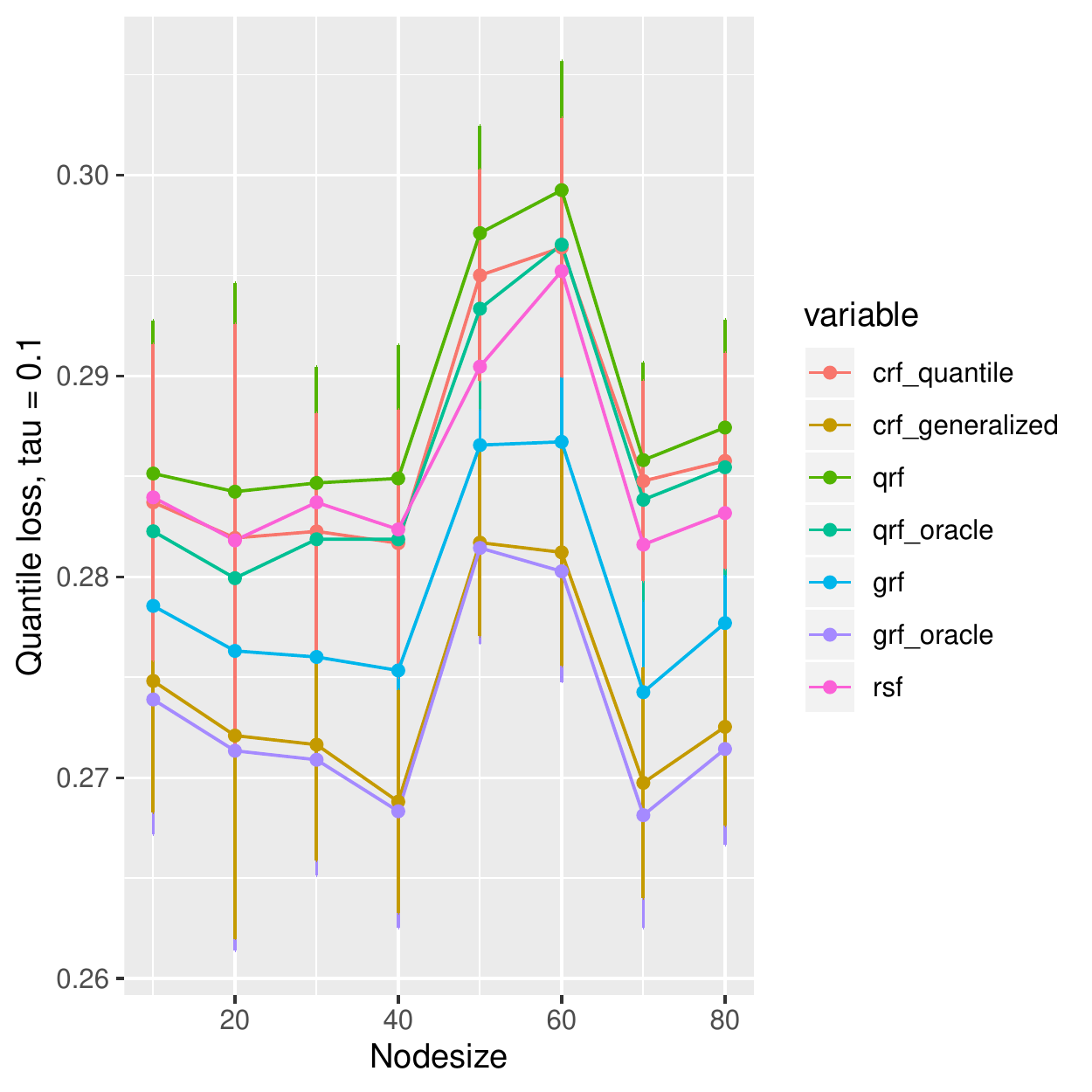}
        \caption{$\tau = 0.1$}
    \end{subfigure}
    ~
    \begin{subfigure}[b]{0.3\textwidth}
        \includegraphics[width=\textwidth]{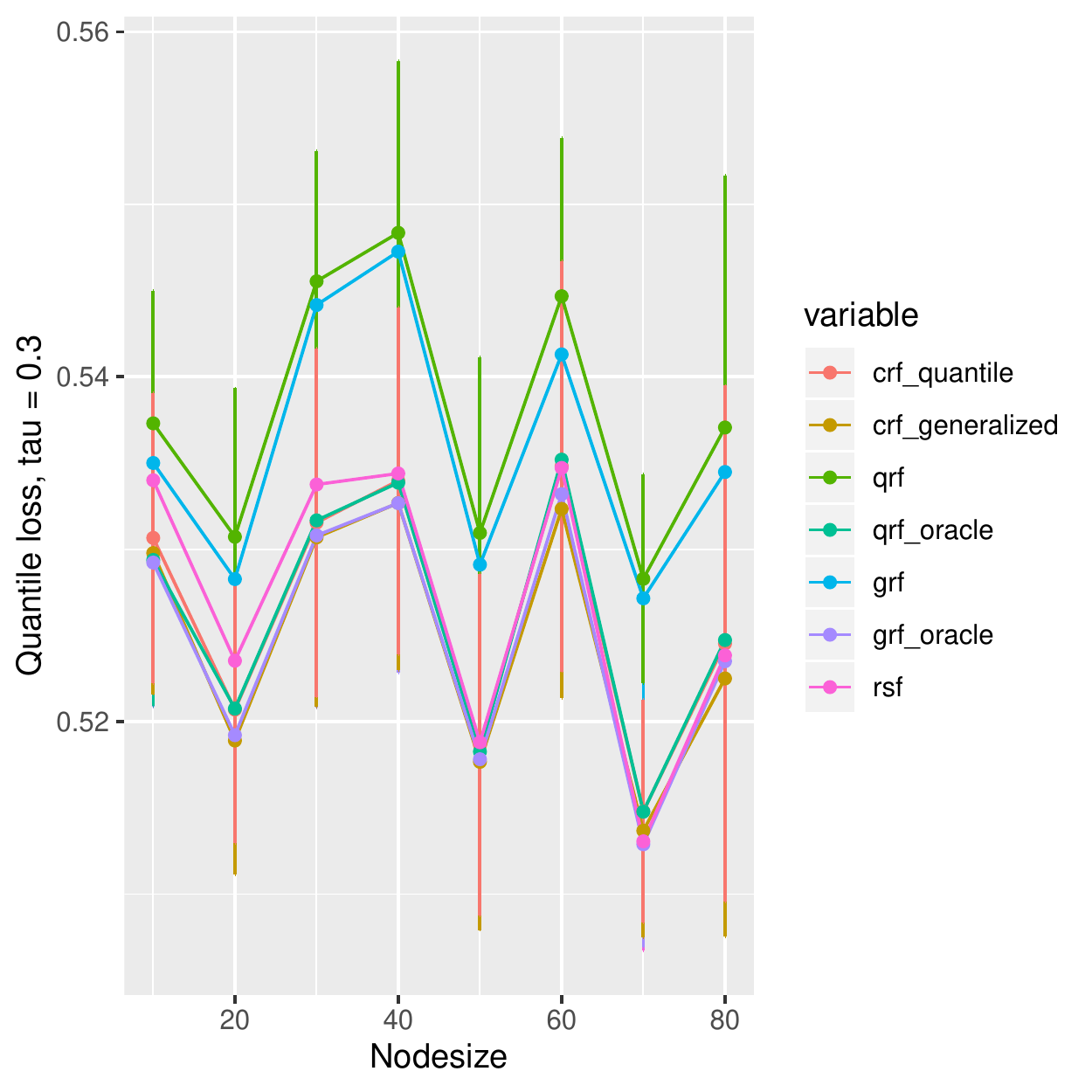}
        \caption{$\tau = 0.3$}
    \end{subfigure}
    ~
    \begin{subfigure}[b]{0.3\textwidth}
        \includegraphics[width=\textwidth]{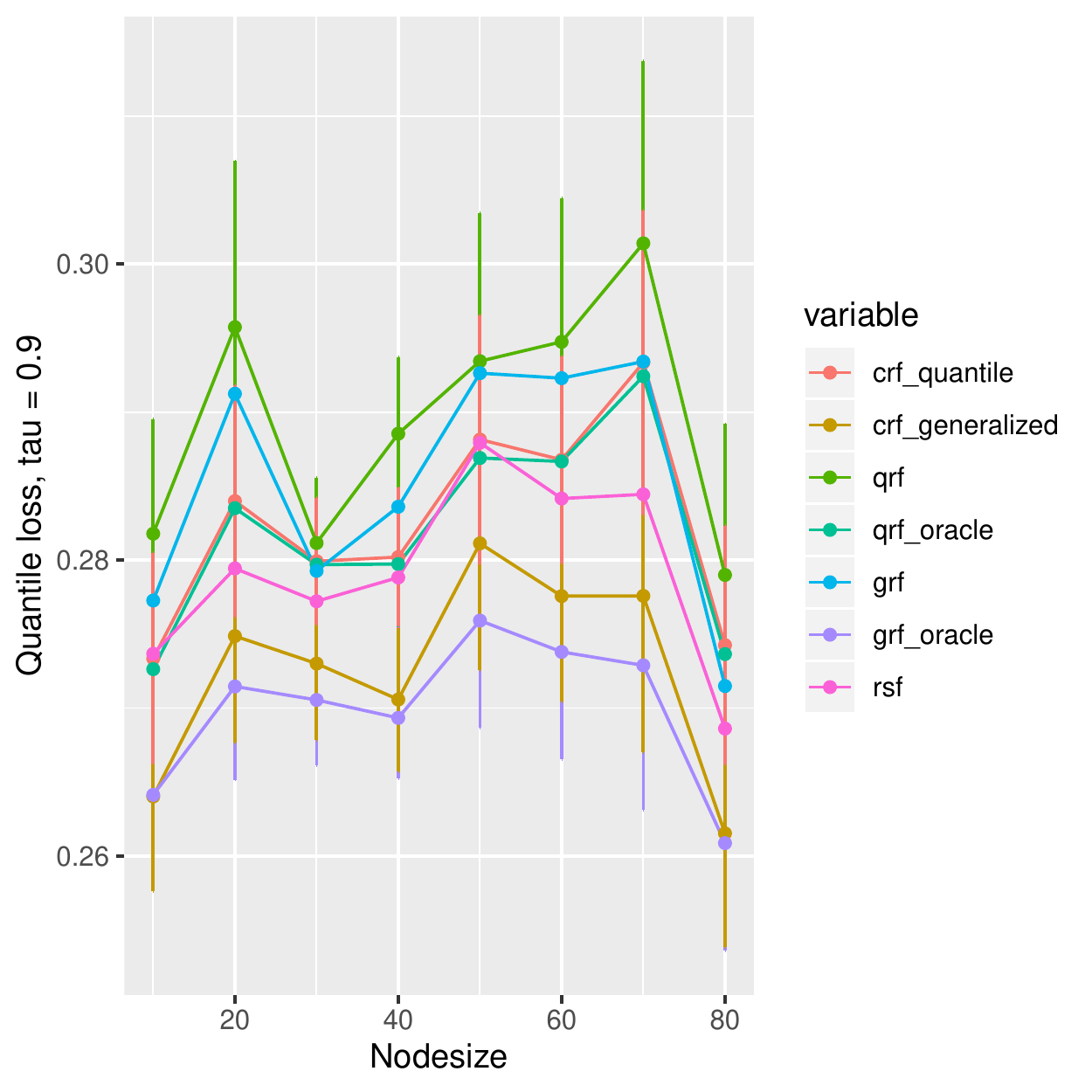}
        \caption{$\tau = 0.9$}
    \end{subfigure}
    
    \caption{Quantile losses on heteroscedastic examples.}
    \label{fig:hetero_nodesize}
\end{figure*}

\subsection{Conditional Survival Functions}

\begin{figure*}[!htb]
    \centering
    \vspace{-0.05in}
    \begin{subfigure}[b]{0.22\linewidth}
        \includegraphics[width=\textwidth]{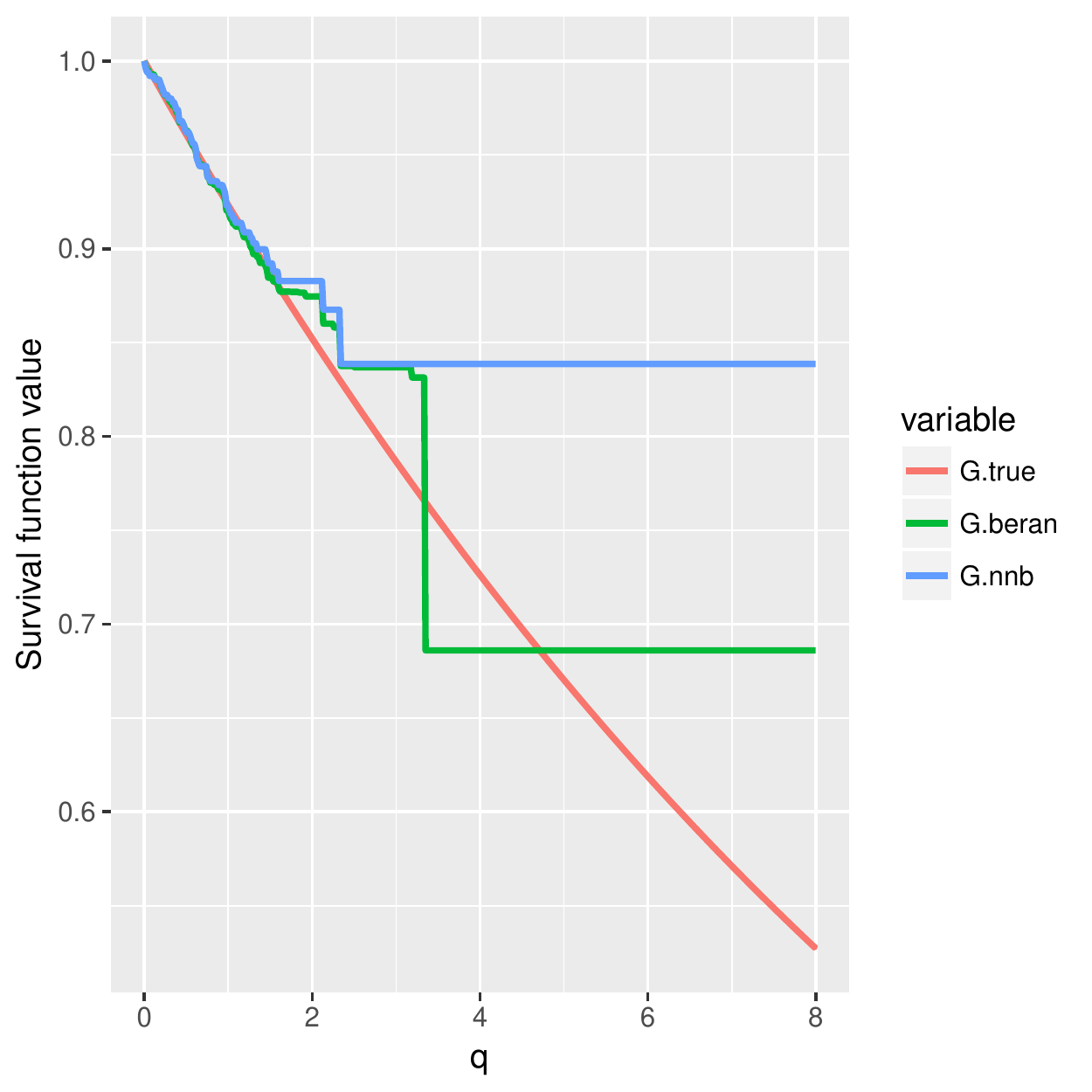}
        \caption{$x=0.4$}
    \end{subfigure}
    ~
    \begin{subfigure}[b]{0.22\linewidth}
        \includegraphics[width=\textwidth]{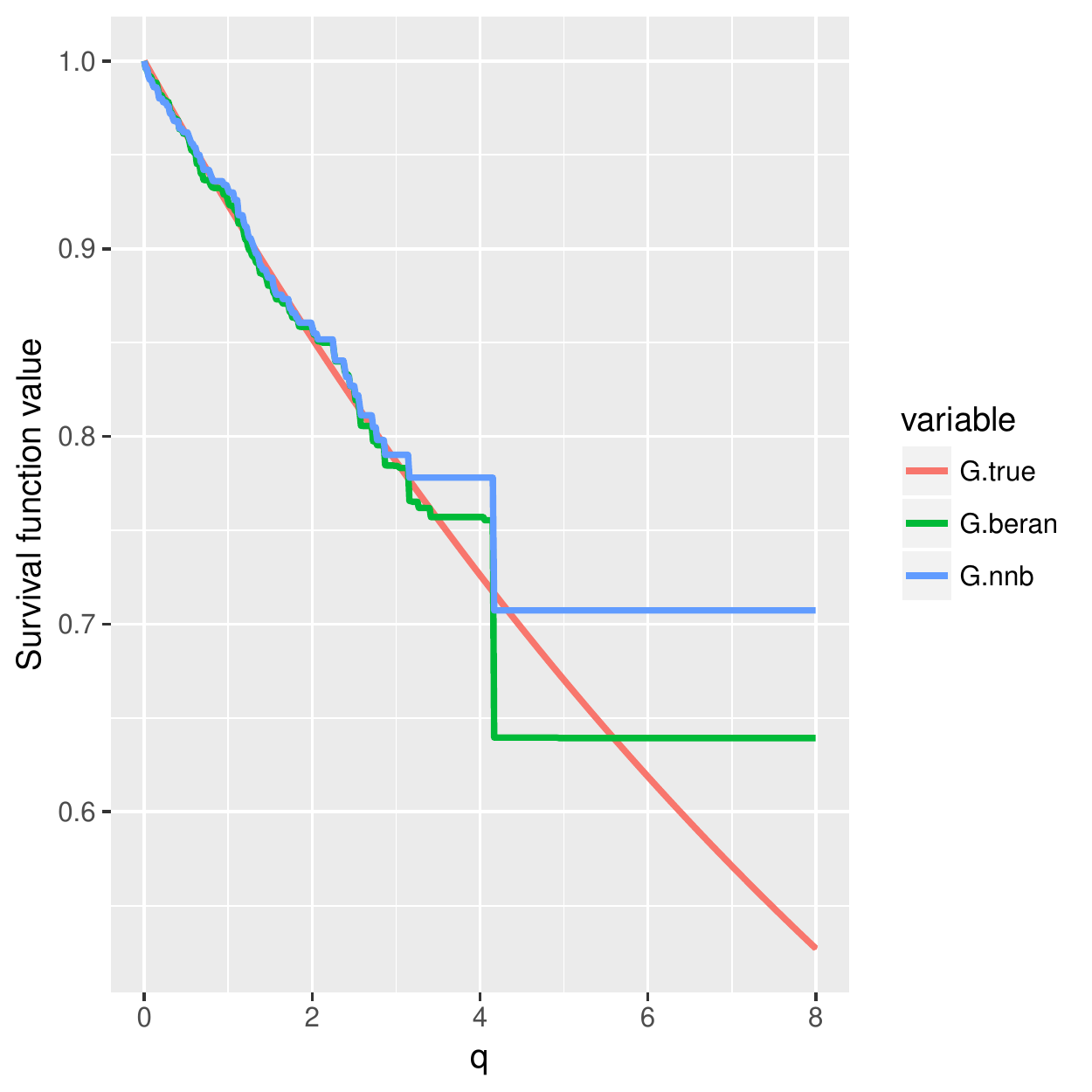}
        \caption{$x=0.8$}
    \end{subfigure}
    ~
    \begin{subfigure}[b]{0.22\linewidth}
        \includegraphics[width=\textwidth]{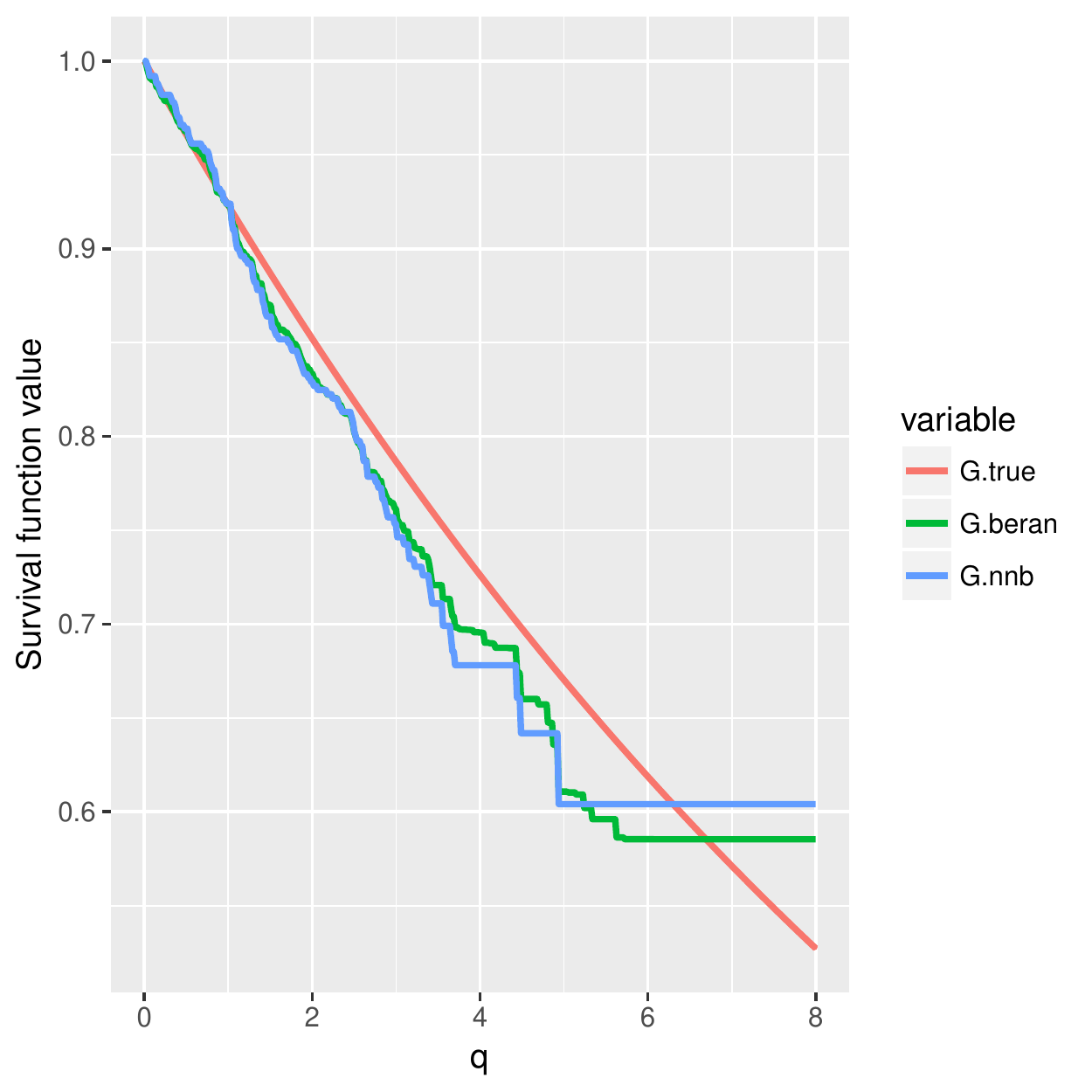}
        \caption{$x=1.2$}
    \end{subfigure}
    ~
    \begin{subfigure}[b]{0.22\linewidth}
        \includegraphics[width=\textwidth]{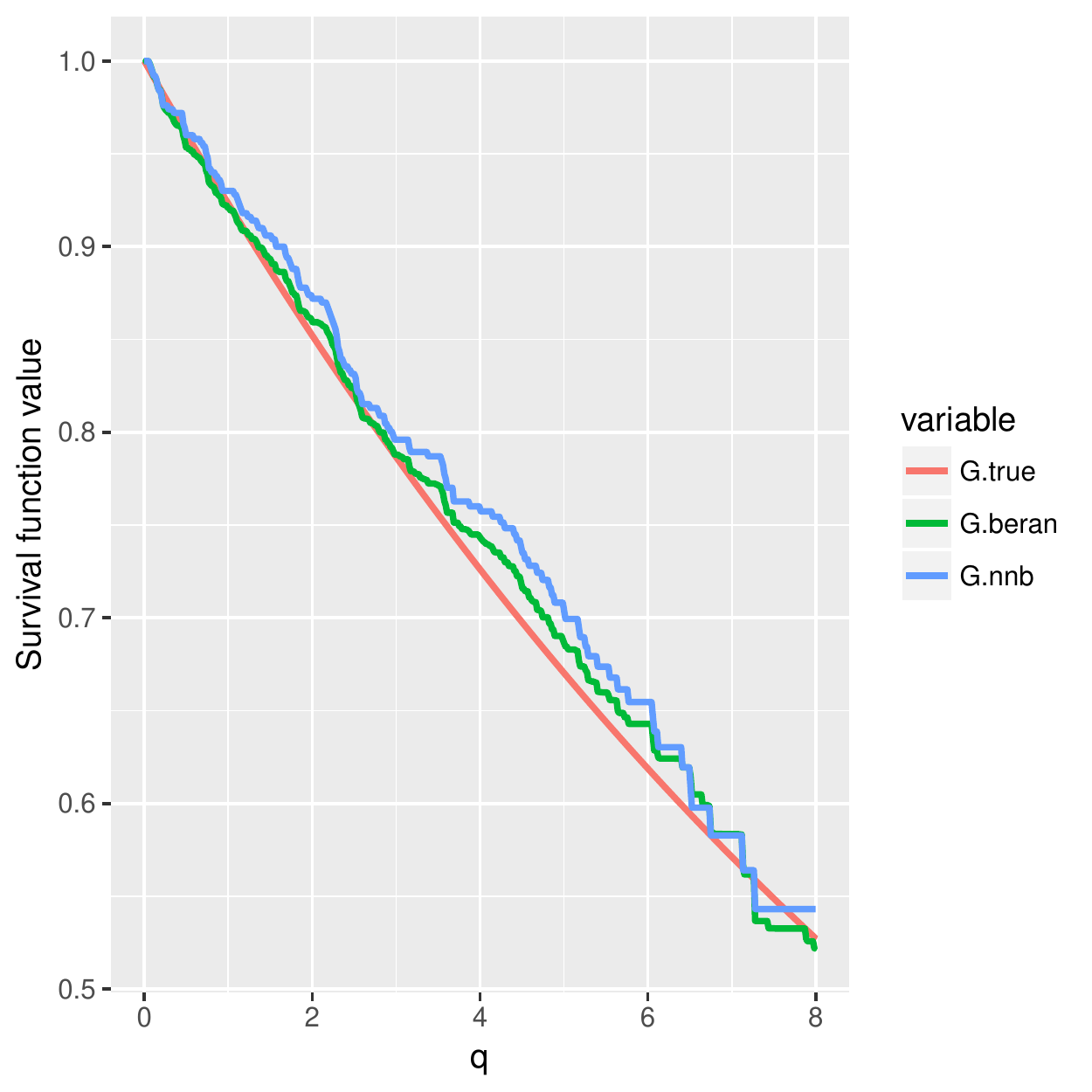}
        \caption{$x=1.6$}
    \end{subfigure}
    
    \caption{Comparison of the two conditional survival estimators on the AFT data. The sample size is 5000. For the nearest neighbor estimator \eqref{eq:KM_kNN}, we set the number of neighbors to be 10\% of the sample size.}
    \label{fig:g_comparison}
    \vspace{-0.05in}
\end{figure*}

In this section, we compare the two proposed conditional survival function estimators \eqref{eq:KM_kNN} and \eqref{eq:Beran_rf}. We generate examples from the AFT model, and then choose four test points $\{x_1=0.4, x_2=0.8, x_3=1.2, x_4=1.6\}$ to plot the conditional survival function estimations on them. The results are shown in Figure \ref{fig:g_comparison}.

\begin{figure*}[!htb]
    \small
    \centering
    \vspace{-0.1in}
    \begin{subfigure}[b]{0.3\linewidth}
        \includegraphics[width=\textwidth]{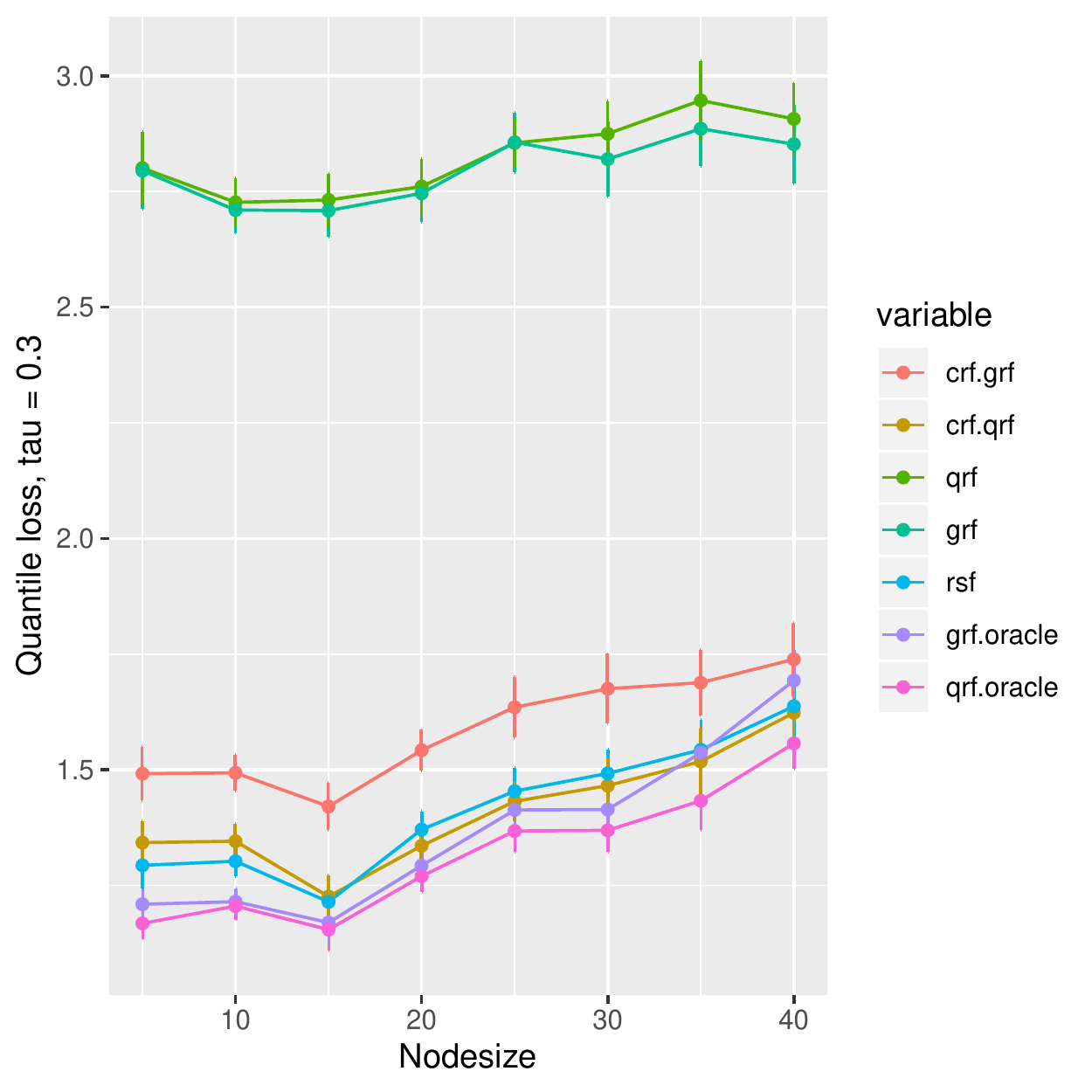}
        \caption{BostonHousing: $\tau = 0.3$}
    \end{subfigure}
    ~
    \begin{subfigure}[b]{0.3\linewidth}
        \includegraphics[width=\textwidth]{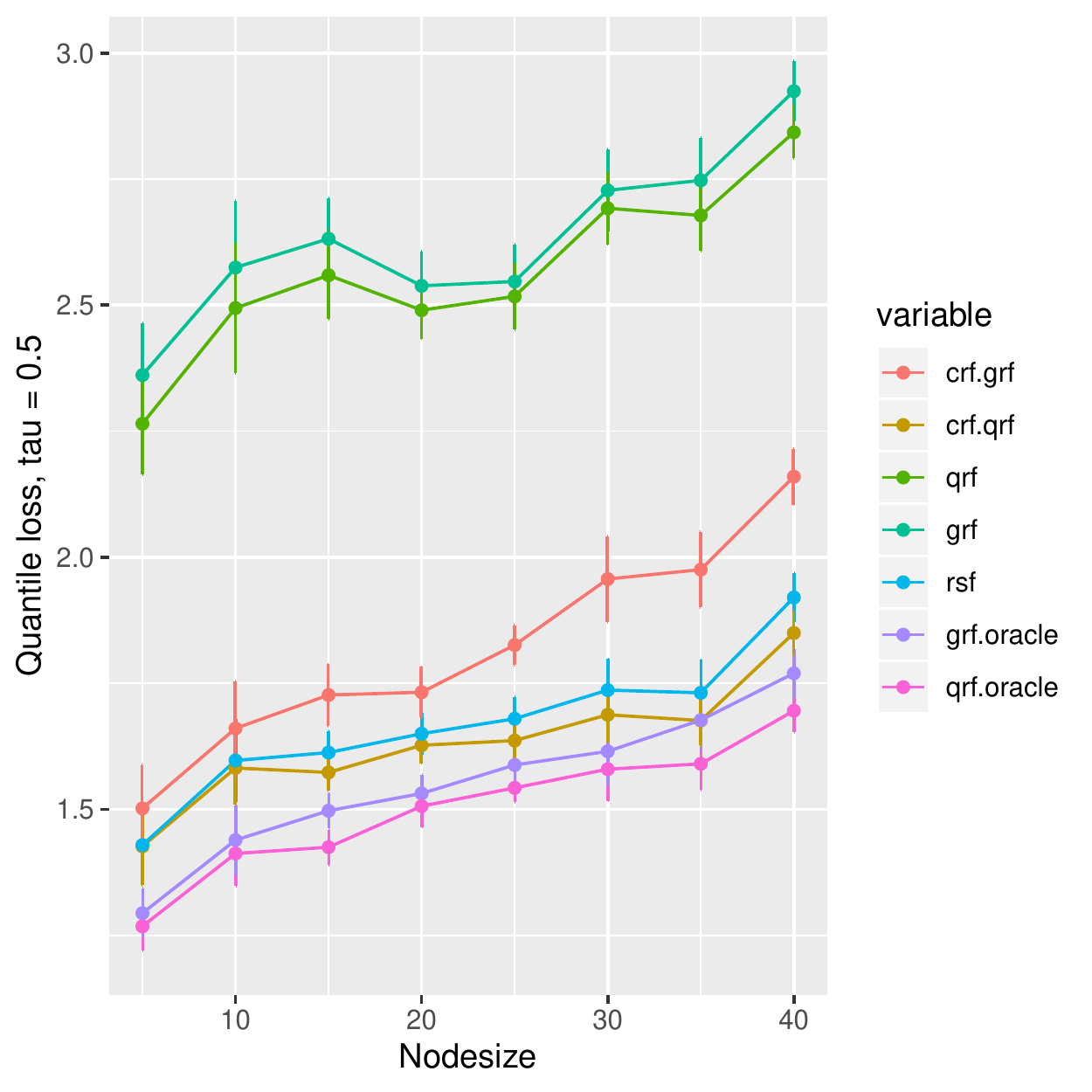}
        \caption{BostonHousing: $\tau = 0.5$}
    \end{subfigure}
    ~
    \begin{subfigure}[b]{0.3\linewidth}
        \includegraphics[width=\textwidth]{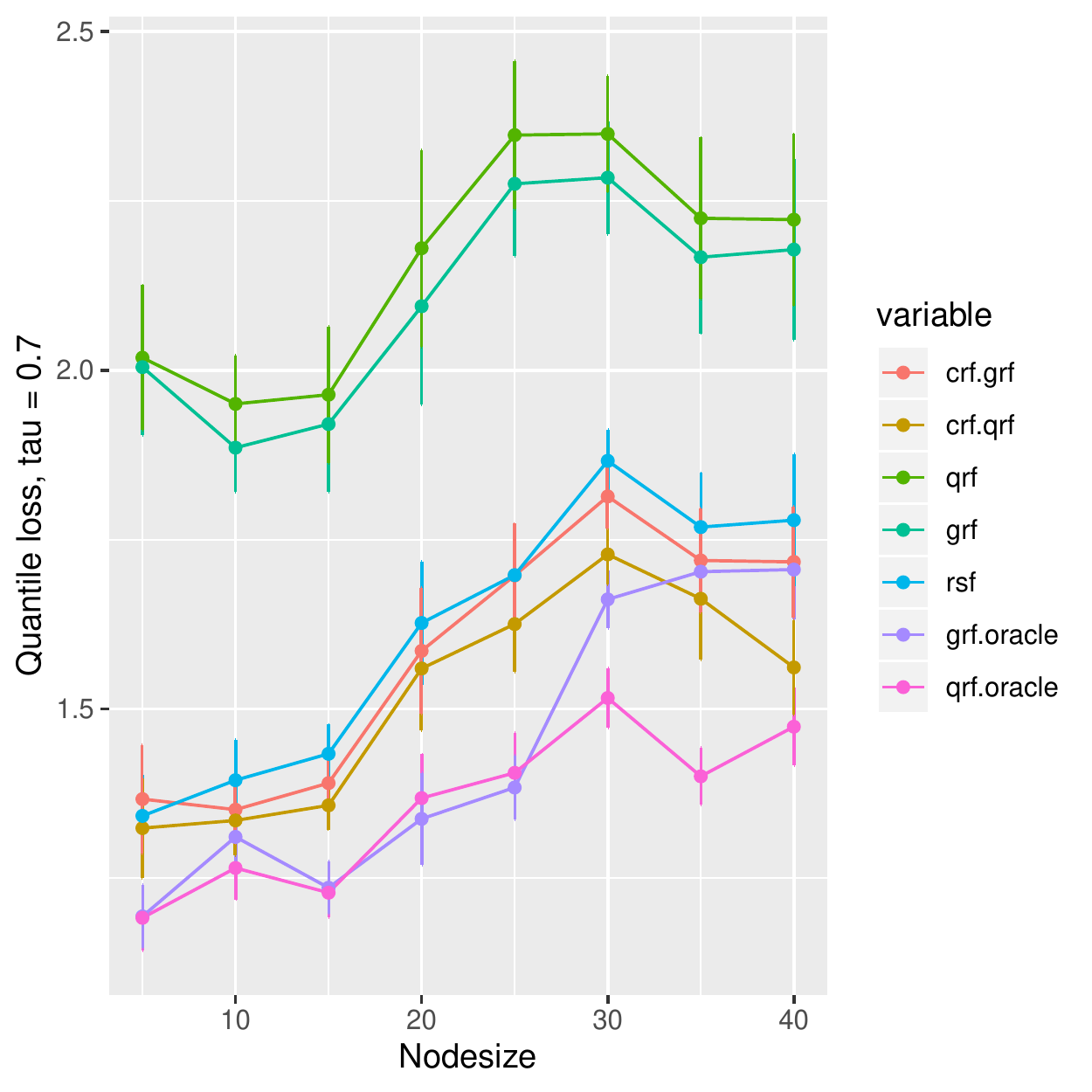}
        \caption{BostonHousing: $\tau = 0.7$}
    \end{subfigure}

    \begin{subfigure}[b]{0.3\linewidth}
        \includegraphics[width=\textwidth]{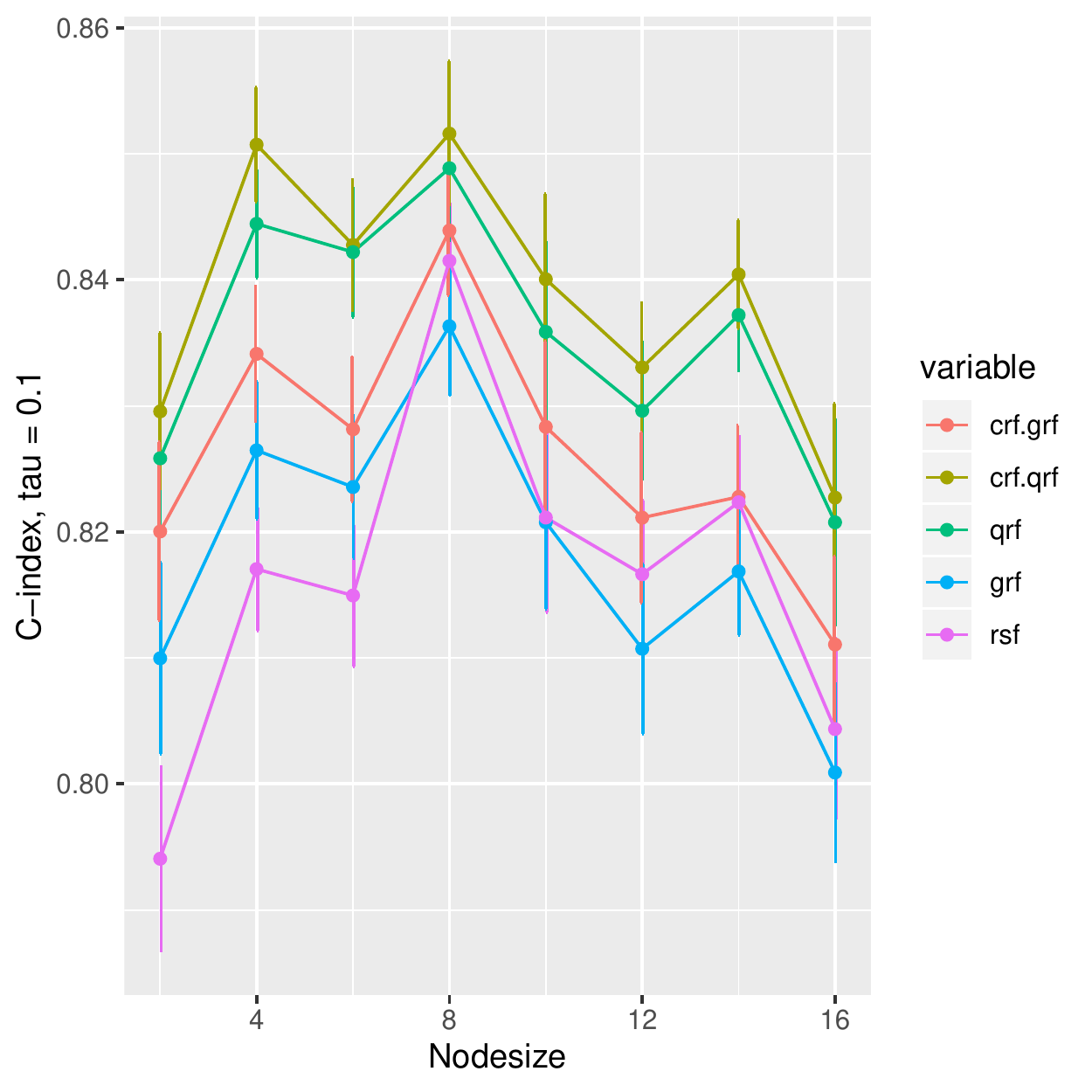}
        \caption{pbc: $\tau = 0.1$}
    \end{subfigure}
    ~
    \begin{subfigure}[b]{0.3\linewidth}
        \includegraphics[width=\textwidth]{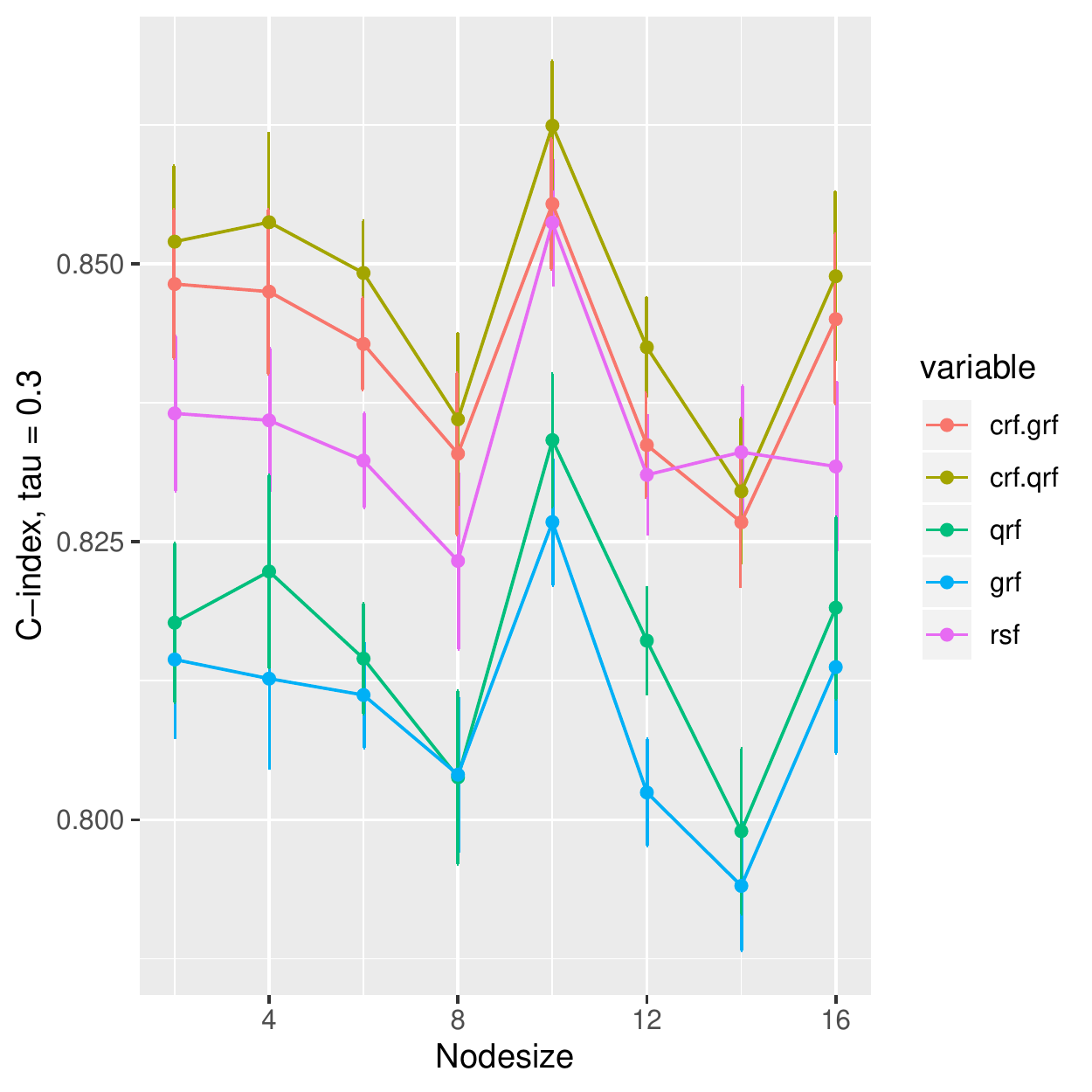}
        \caption{pbc: $\tau = 0.3$}
    \end{subfigure}
    ~
    \begin{subfigure}[b]{0.3\linewidth}
        \includegraphics[width=\textwidth]{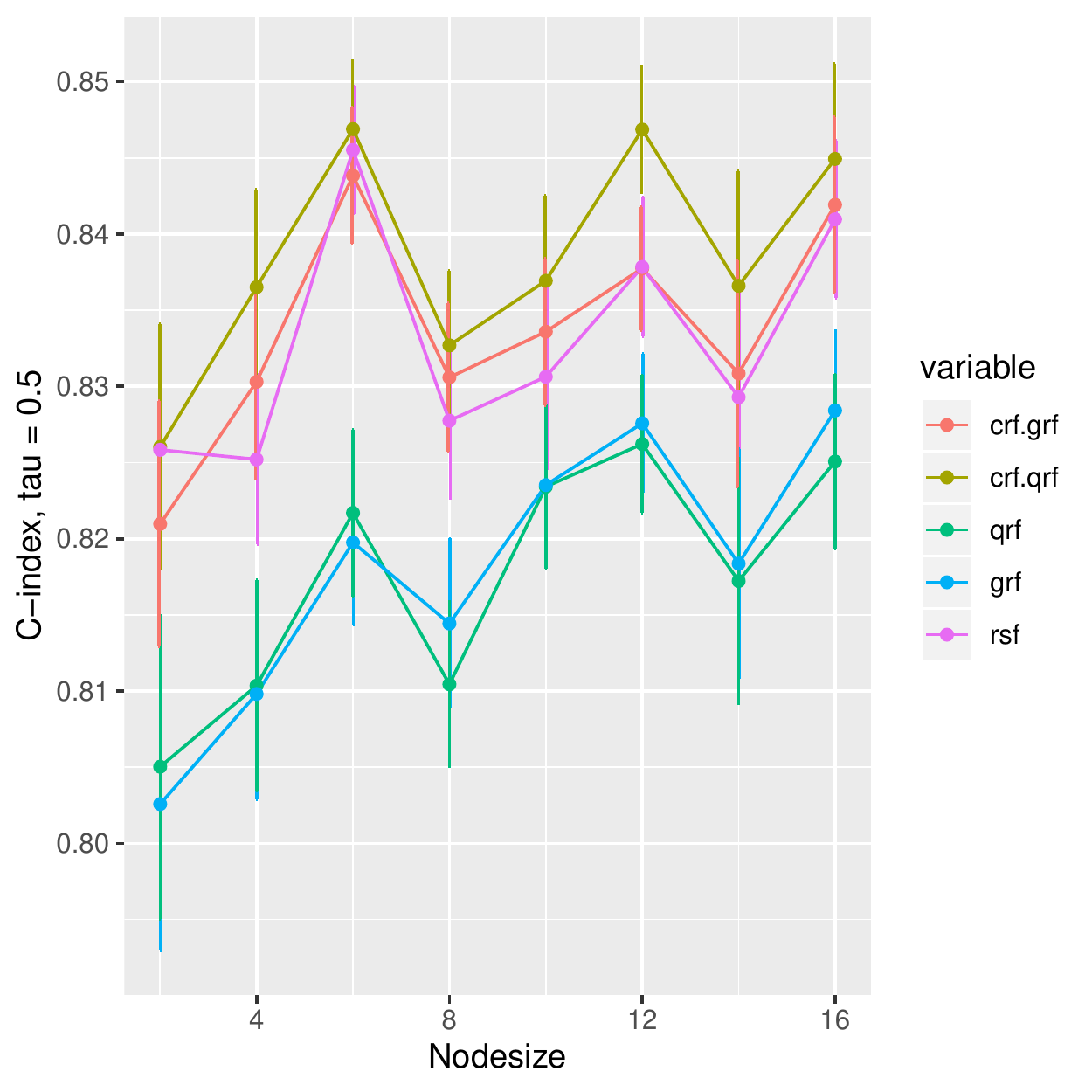}
        \caption{pbc: $\tau = 0.5$}
    \end{subfigure}
    ~
    \caption{The results on real datasets. On BostonHousing, we report the quantile losses (the lower the better), and on pbc dataset, we report the C-index (the higher the better).}
    \label{fig:real_data_nodesize}
    \vspace{-0.1in}
\end{figure*}

We observe that when $n$ increases, two curves become closer and are both good approximations of the true survival curve. But the first method \eqref{eq:KM_kNN} has an extra tuning parameter $k$ -- the number of nearest neighbors. Therefore, in the experiments, we always choose to use the second estimator \eqref{eq:Beran_rf} which is parameter free.

Note that the estimated survival function will degenerate at the tail of the distribution when the test point $x$ is small. This is a common phenomenon even for the regular KM estimator because there is no censored observations beyond some time point. In the AFT model, when $x$ is small, the conditional mean of $T$ is also small, and hence we could not observe most of the censoring values, leading to degenerated survival curves.

\subsection{Real Data}

In this section, we compare the proposed method with other forest algorithms on two real datasets, BostonHousing \citep{Dua:2019} and Primary Biliary Cirrhosis (PBC) Data \citep{fleming2011counting}. On the BostonHousing data, we manually generate censoring variables from $\text{Exp}(\lambda=1/2 \bar{y})$ where $\bar{y}$ is the sample mean of the house prices. The censoring level is about 40\%. We evaluate the models using quantile loss because we know the true responses in this case. The PBC data is already right-censored, and the censoring rate is about 60\%. On this dataset, we cannot use quantile loss for evaluation because we do not know the true responses of the censored data in the test set. (Note that we can sample uncensored data to form a test set, but these data will be biased.) Instead, we use the Harrell's concordance index (C-index) \citep{harrell1982evaluating}. We repeat the experiment for each node size for 50 times and report the mean and standard deviation of the C-index. For each experiment, we randomly sample 80\% of the data for training and the rest for testing. All the forests contain 1000 trees. The results are in Figure \ref{fig:real_data_nodesize}. 

Overall, the proposed method with random forest weights \textit{crf-quantile} has the best performance. It agrees with our observation in the simulations that \textit{crf-quantile} works better than the other methods if there is no clear heteroscedasticity in the data.

\section{Discussion}
In this article, we introduced \textit{censored quantile regression forest}, a novel non-parametric method for quantile regression problems that is integrated with the censored nature of the observations. While preserving information carried by the censored observations, the novel estimating equation maintains the flexibility of general forest approaches. One of the promising applications of the introduced method is in the estimation of heterogeneous treatment effects when the response variable is censored. Treatment discovery with right-censored observations is an important and yet poorly understood research area. Equipping this literature with the proposed fully non-parametric approach would lead to a significant broadening of the now more known parametric approaches. We also observe that our estimating equations can be easily replaced with another kind that targets treatment effects directly.

\clearpage
\bibliography{forestcqr}

\clearpage
\appendix
\section{Theorems and Proofs}
\subsection{Time complexity} \label{subsec:alg}
The step \ref{step:4} in Algorithm \ref{alg:main} involves of finding the $q^*$ in a candidate set $\mathcal{C}$ that sets the estimating equation $S_n(q; \tau)$ closest to zero. We simply evaluate the function $S_n(q;\tau)$ for all possible $q$ in $\mathcal{C}$ and find the minimum point. Note that for any fixed $\tau$, $S_n(q; \tau)$ is a step function in $q$ with jumps at $Y_i$'s because the discontinuities only happen at $Y_i$'s for $\hat{G}(q|x)$ (both \eqref{eq:KM_kNN} and \eqref{eq:Beran_rf}) and $\sum_{i=1}^n w(X_i, x) \ind(Y_i > q)$. Therefore, the candidate set $\mathcal{C} \subset \{Y_i\}_{i=1}^n$, and $|\mathcal{C}| = n$ in the worst case.

But in fact, for any fixed $x$, only $Y_i$'s with the corresponding feature vector $X_i \in R_x$ \eqref{eq:KM_kNN} or with $w(X_i, x) > 0$ \eqref{eq:Beran_rf} will be jump points, and hence, we can refine $\mathcal{C} = \{Y_i: X_i \in R_x\}$ for \eqref{eq:KM_kNN} or $\mathcal{C} = \{Y_i: w(X_i, x) > 0\}$ for \eqref{eq:Beran_rf}. We then have the following theorem.

\begin{theorem}\label{thm:complexity}
For a fixed test point $x$, depending on whether $G(q|X)$ is estimated by \eqref{eq:KM_kNN} or \eqref{eq:Beran_rf}, the time complexity for Algorithm \ref{alg:main} is $O(n \max\{k, \log(n)\})$ or $O(n m \log(n)^{p-1})$, respectively.
\end{theorem}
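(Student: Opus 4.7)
The plan is to decompose the per-test-point cost of Algorithm~\ref{alg:main} into three pieces: (i)~extracting the forest weights $w(X_i,x)$ from the pre-trained ensemble, (ii)~identifying the candidate set $\mathcal{C}$ of jump points of $S_n(q;x)$, and (iii)~evaluating $|S_n(q;x)|$ at every $q\in\mathcal{C}$ and taking the argmin. The key structural fact, noted immediately before the theorem, is that $S_n(q;x)$ is a right-continuous step function whose jumps lie in $\{Y_i\}_{i=1}^n$, so step~(iii) reduces to a finite search whose length equals $|\mathcal{C}|$.

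I would first establish the basic bookkeeping. Because every terminal node holds at least $m$ points and the $B=O(1)$ trees can be constructed to have depth $O(\log(n/m))=O(\log n)$, tracing $x$ through the ensemble and reading off the weights takes $O(B\log n)$, and at most $O(Bm)$ training indices carry a nonzero $w(X_i,x)$. Sorting the $n$ responses $Y_i$ once gives $O(n\log n)$ preprocessing, after which the partial sum $\sum_{i}w(X_i,x)\ind(Y_i>q)$ can be maintained incrementally as $q$ sweeps through $\mathcal{C}$ in increasing order at amortised $O(1)$ per step. The Kaplan--Meier type product $\hat G(q|x)$ in either \eqref{eq:KM_kNN} or \eqref{eq:Beran_rf} can likewise be updated multiplicatively, again at $O(1)$ per step, because each new $q\in\mathcal{C}$ introduces exactly one new factor.

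For case \eqref{eq:KM_kNN}, I would argue that $\mathcal{C}=\{Y_i:X_i\in N_x\}$ has size exactly $k$, and that locating $N_x$ by a linear scan through the weights of magnitude is $O(n)$ with quickselect, or $O(n\log k)$ with a size-$k$ heap. After the $O(n\log n)$ sort plus the $O(n)$ neighbour-selection, the sweep over $|\mathcal{C}|=k$ jump points costs $O(k)$, yielding the stated bound $O(n\max\{k,\log n\})$. For case \eqref{eq:Beran_rf}, I would bound $|\mathcal{C}|$ by the number of training points with $w(X_i,x)>0$. Under the subsampling/tree-building regime of Condition~\ref{cond:2}, arguments of the generalised-forest type (essentially, bounding the number of distinct indices appearing in the at most $B$ leaves that contain $x$) give $|\mathcal{C}|=O(m\log(n)^{p-1})$, because each coordinate can split $O(\log n)$ times before the node shrinks below size $m$ and there are $p$ coordinates. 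Multiplying by the $O(n)$ cost for traversing the sorted $Y_i$'s (to produce the incremental sums alongside) yields $O(nm\log(n)^{p-1})$.

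The main obstacle is the bound $|\mathcal{C}|=O(m\log(n)^{p-1})$ in the second case: it is not immediate from the algorithm's description and requires combining the leaf-size restriction~$m$ with Condition~\ref{cond:2} on the regularity of splits, together with an argument that weights from different trees reuse a common pool of indices rather than summing independently. One must also be careful that honest subsampling does not inflate the count by a factor of the subsample size. Beyond that, the remaining steps are routine amortisation arguments, so once the cardinality bound is in hand the two complexities follow immediately by summing the three pieces.
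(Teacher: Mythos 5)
Your overall decomposition is the same as the paper's: identify that $S_n(q;x)$ is a step function with jumps in $\{Y_i\}$, restrict the search to the candidate set $\mathcal{C}$ of points with nonzero weight (or the $k$ nearest neighbours), bound $|\mathcal{C}|$, and account for the cost of building $\mathcal{C}$ and evaluating $S_n$ over it. The kNN case goes through exactly as you describe. However, there is a genuine gap in the Beran case, and it sits precisely at the step you flag as "the main obstacle": the bound $|\mathcal{C}| = O(m\log(n)^{p-1})$. The paper does not derive this from scratch; it invokes the potential-nearest-neighbour count of \cite{lin2006random}, which bounds the number of training points that can fall in the same leaf as $x$ across a forest grown under the regularity in Condition \ref{cond:2}. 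Your heuristic replacement --- "each coordinate can split $O(\log n)$ times before the node shrinks below size $m$ and there are $p$ coordinates" --- does not deliver the stated exponent: taken at face value it would give on the order of $(\log n)^{p}$ rather than $(\log n)^{p-1}$, and it also does not by itself control how indices accumulate across the $B$ trees (your remark that different trees "reuse a common pool of indices" is exactly what needs proof, or a citation). As written, the second half of the theorem is asserted rather than established.

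A secondary, non-fatal inconsistency: after setting up an amortised sweep in which $\hat G(q|x)$ and the weighted tail sum are updated in $O(1)$ per candidate (after an $O(n\log n)$ sort), you then multiply $|\mathcal{C}|$ by an $O(n)$ factor to recover $O(nm\log(n)^{p-1})$. That factor belongs to the naive scheme the paper uses (each evaluation of $S_n$ costs $O(n)$); under your amortised scheme the total would instead be $O(n\log n + m\log(n)^{p-1})$, which is smaller. Since the theorem only claims an upper bound, this does not make your conclusion wrong --- it just means your argument proves (modulo the cardinality bound above) a stronger statement than the one you then state, and the final "multiplying by $O(n)$" step should either be dropped or justified as the naive per-candidate evaluation cost.
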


\begin{proof}[Proof of Theorem \ref{thm:complexity}]
To get the candidate set $\mathcal{C}$, if we use the k-nearest neighbor estimator \eqref{eq:KM_kNN}, then the first step is to sort $n$ weights and choose the largest $k$ elements. This is in general a $O(n \log(n))$ procedure. If we use the Beran estimator \eqref{eq:Beran_rf}, then the time complexity is $O(n)$ because we need to find all the nonzero weights.

After we have the candidate set $\mathcal{C}$, evaluating $S_n(q;\tau)$ for all $q \in \mathcal{C}$ and finding the minimum is a $O(n |\mathcal{C}|)$ procedure. For \eqref{eq:KM_kNN}, $|\mathcal{C}| = k$; and for \eqref{eq:Beran_rf}, $|\mathcal{C}|$ is in the order of $m \log(n)^{p-1}$ by \cite{lin2006random}.
\end{proof}

\subsection{Proof of Theorem \ref{thm:part1}}
\begin{proof}
When the conditions \ref{cond:3} to \ref{cond:5} are satisfied, by Theorem 3 in \citet{athey2019generalized} or Theorem 1 in \citet{meinshausen2006quantile}, we have
\begin{align*}
    \left| \sum_{i=1}^n w(X_i, x) \ind\{Y_i \le q\} - \mathbb{P}(Y \le q | x) \right| = o_p(1).
\end{align*}
Note that $\sum_{i=1}^n w(X_i, x) = 1$ and $0 \le w(X_i, x) \le 1/m$. For convenience, we suppress the dependency on $x$ and denote $F_n(q) = \sum_{i=1}^n w(X_i, x) \ind\{Y_i \le q\}$ and $F(q) = \mathbb{P}(Y \le q | x)$. Because $F$ is continuous, choose $q_0 < q_1 < \ldots < q_n$ from $\mathcal{B}$ such that $F(q_j) - F(q_{j-1}) = 1/n$. Then for any $q \in \mathcal{B}$, there exists $j \in \{1,\ldots, n\}$ such that $q \in [q_{j-1}, q_j]$, and hence $F_n(q) - F(q) \le F_n(q_j) - F(q_{j-1}) = F_n(q_j) - F(q_j) + 1/n$. Similarly, $F_n(q) - F(q) \ge F_n(q_{j-1}) - F(q_{j-1}) - 1/n$. Therefore, we have
\begin{align*}
    \sup_{q \in \mathcal{B}} \left| F_n(q) - F(q) | x) \right| \le \max_{j=1,\ldots,n} \left| F_n(q_j) - F(q_j) \right| + 1/n.
\end{align*}
Then by Bonferroni's inequality, we have
\begin{align*}
    \sup_{q \in \mathcal{B}} \left| F_n(q) - F(q) | x) \right| = o_p(1).
\end{align*}
Combined with Condition \ref{cond:5}, we have the expected result.
\end{proof}

\subsection{Proof of Theorem \ref{thm:consistency}}
\begin{proof}[Proof of Theorem \ref{thm:consistency}]
By \cite{van2000asymptotic}, we only need to show for any $\tau \in (0,1)$, $x \in \mathcal{X}$,
\begin{enumerate}
\item \label{part:1}
$\sup_{q \in [-r,r]} | S_n(q; \tau) - S(q; \tau) | = o_p(1)$.
\item \label{part:2}
For any $\epsilon > 0$,
$\inf\{|S(q;\tau)|: |q - q^*| \ge \epsilon, q \in [-r, r] \} > 0$. Here, $q^*$ stands for the true $\tau$th quantile of $T$.
\item \label{part:3}
$S_n(q_n; \tau) = o_p(1)$.
\end{enumerate}
Part \ref{part:1} has been proved by Theorem \ref{thm:part1}. For part \ref{part:2}, note that
\begin{eqnarray*}
S(q;\tau) &=& (1-\tau)G(q|x) - \mathbb{P}(Y > q|x) \\
&=& (1-\tau)G(q|x) - \mathbb{P}(T > q|x) \mathbb{P}(C > q|x) \\
&=& ((1-\tau) - \mathbb{P}(T > q|x)) G(q | x) \\
&=& (\mathbb{P}(T \le q|x) - \tau) G(q | x).
\end{eqnarray*}
The second equality is because of the conditionally independency between $T$ and $C$. Fix an $\epsilon > 0$, and denote $$E = \{|S(q;\tau)|: |q - q^*| \ge \epsilon, q \in [-r, r]\}.$$ Since $0 < \tau < 1$, by Condition \ref{cond:4}, there exists some $l > 0$ such that $G(q|x) \ge l$ and $$|\mathbb{P}(T \le q | x) - \tau| \ge l$$ for $q \in E$. Now for part \ref{part:3}, by the definition of $q_n$, we know 
$$|S_n(q_n; \tau)| = \min_{q \in [-r,r]} |S_n(q;\tau)|.$$ Also by definition of $q^*$, 
$$0 = |S(q^*;\tau)| = \min_{q \in [-r,r]} |S(q;\tau)|.$$ Then we get
\begin{align*}
    & |S_n(q_n;\tau)| \\
    & = |S_n(q_n;\tau)| - |S_n(q^*;\tau)| + |S_n(q^*;\tau)| - |S(q^*;\tau)| \\
    &\le | S_n(q^*;\tau) - S(q^*;\tau) | \\
    &\le \sup_{q \in [-r,r]} |S_n(q;\tau) - S(q;\tau)| \\
    &= o_p(1)
\end{align*}
where the first inequality is because of the definition of $q_n$ and the triangular inequality.
\end{proof}

\section{More Experiments}

\subsection{Prediction Intervals}
All the forest methods can be used to get $95\%$ prediction intervals by predicting the $0.025$ and $0.975$ quantiles of the true response variable. Then for any location $x \in \mathcal{X}$, a straightforward confidence interval will be $[Q(x;0.025), Q(x;0.975)]$. The result is illustrated in Figure \ref{fig:sine_ci_plot} for the case of univariate censored sine model. For each data set, we bootstrap the data and calculate the $0.025$ and $0.975$ quantile for the out of bag points. Then for each node size, we repeat this process for 20 times and calculate the average coverage rate of the confidence intervals.

\begin{figure}[!htb]
    \small
    \centering
    \begin{subfigure}[b]{0.3\linewidth}
        \centering
        \includegraphics[width=\textwidth]{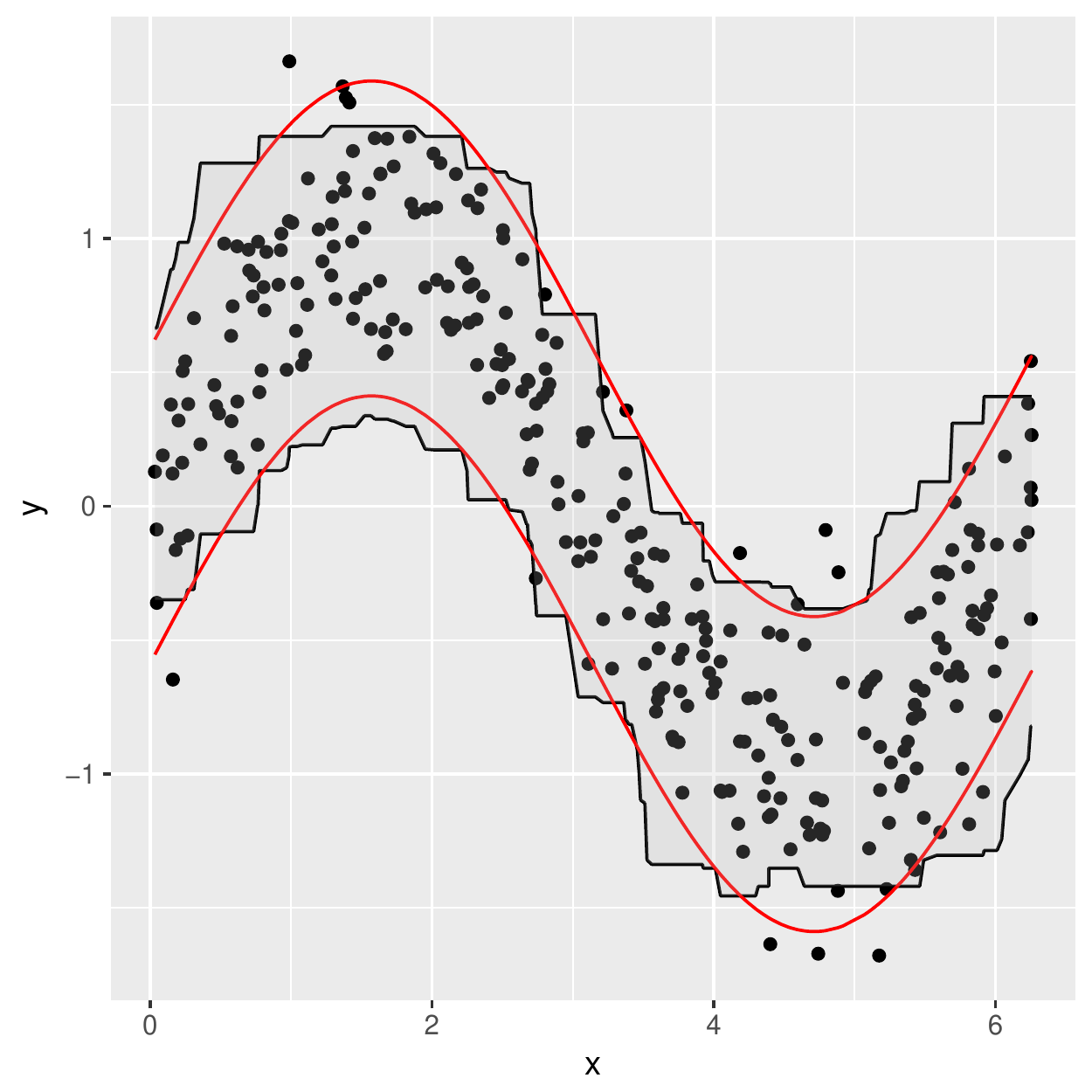}
        \caption{crf}
    \end{subfigure}
    ~
    \begin{subfigure}[b]{0.3\linewidth}
        \centering
        \includegraphics[width=\textwidth]{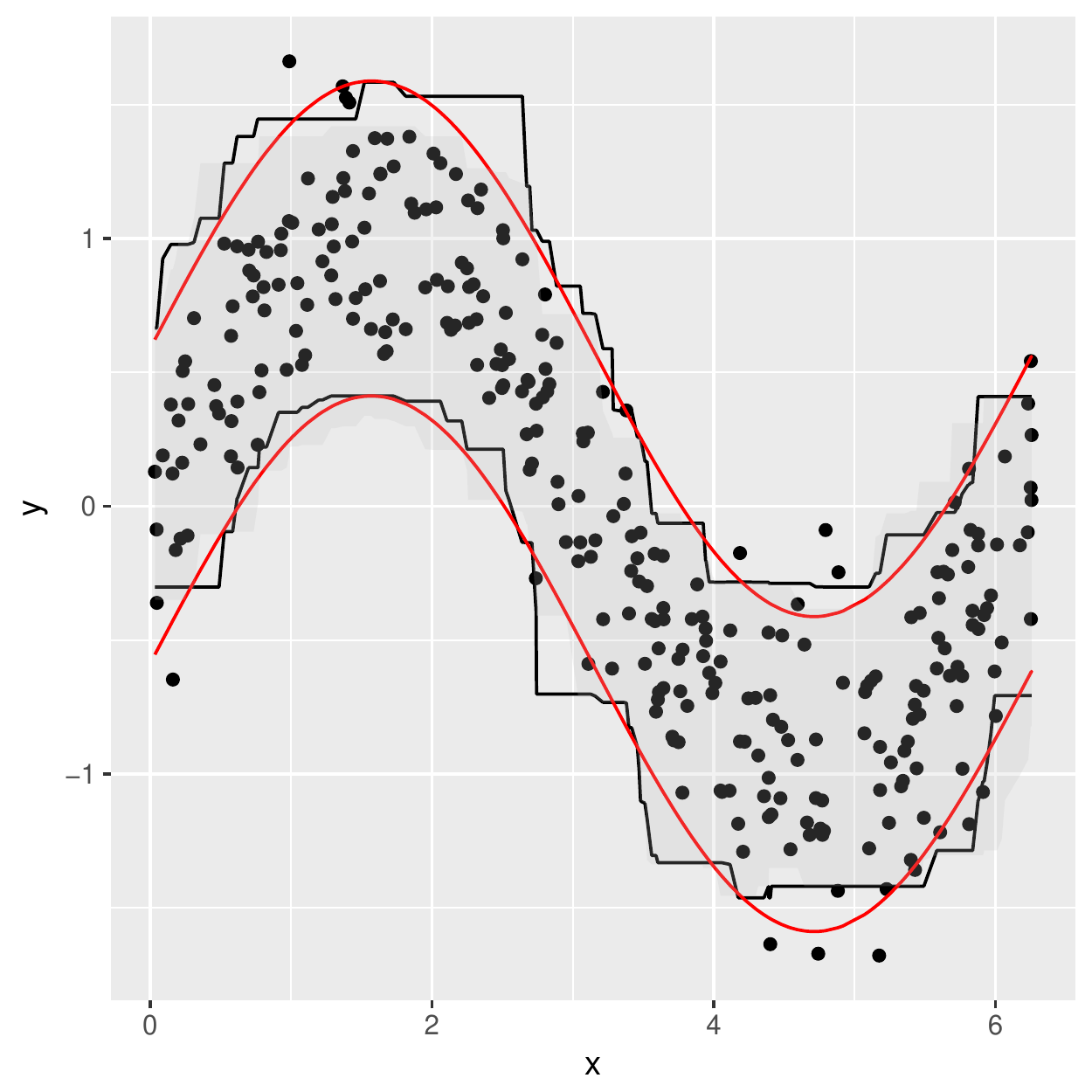}
        \caption{qrf-oracle}
    \end{subfigure}
    ~
    \begin{subfigure}[b]{0.3\linewidth}
        \centering
        \includegraphics[width=\textwidth]{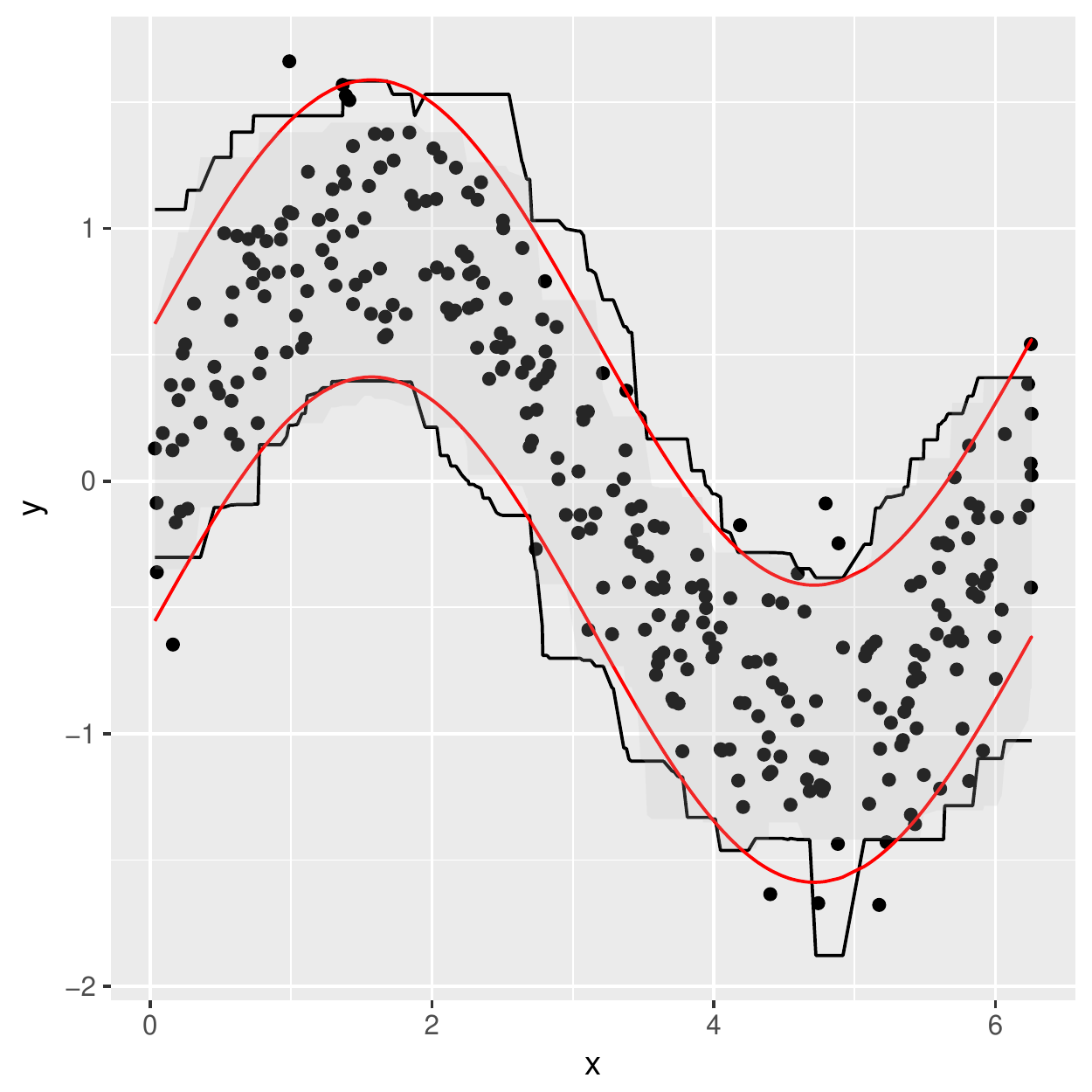}
        \caption{grf-oracle}
    \end{subfigure}
    ~
    
    \vspace{-0.05in}
    
    \begin{subfigure}[b]{0.3\linewidth}
        \centering
        \includegraphics[width=\textwidth]{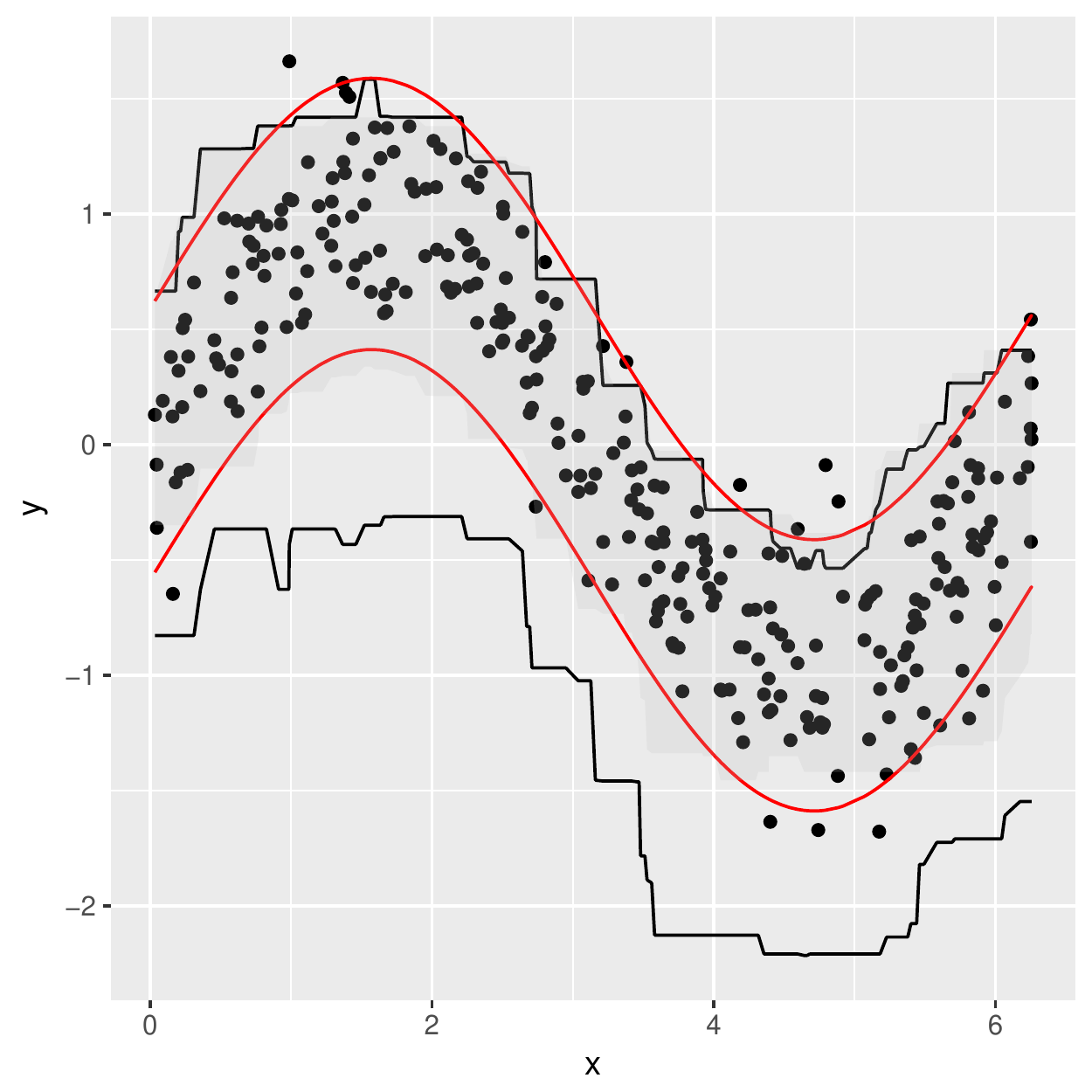}
        \caption{qrf}
    \end{subfigure}
    ~
    \begin{subfigure}[b]{0.3\linewidth}
        \centering
        \includegraphics[width=\textwidth]{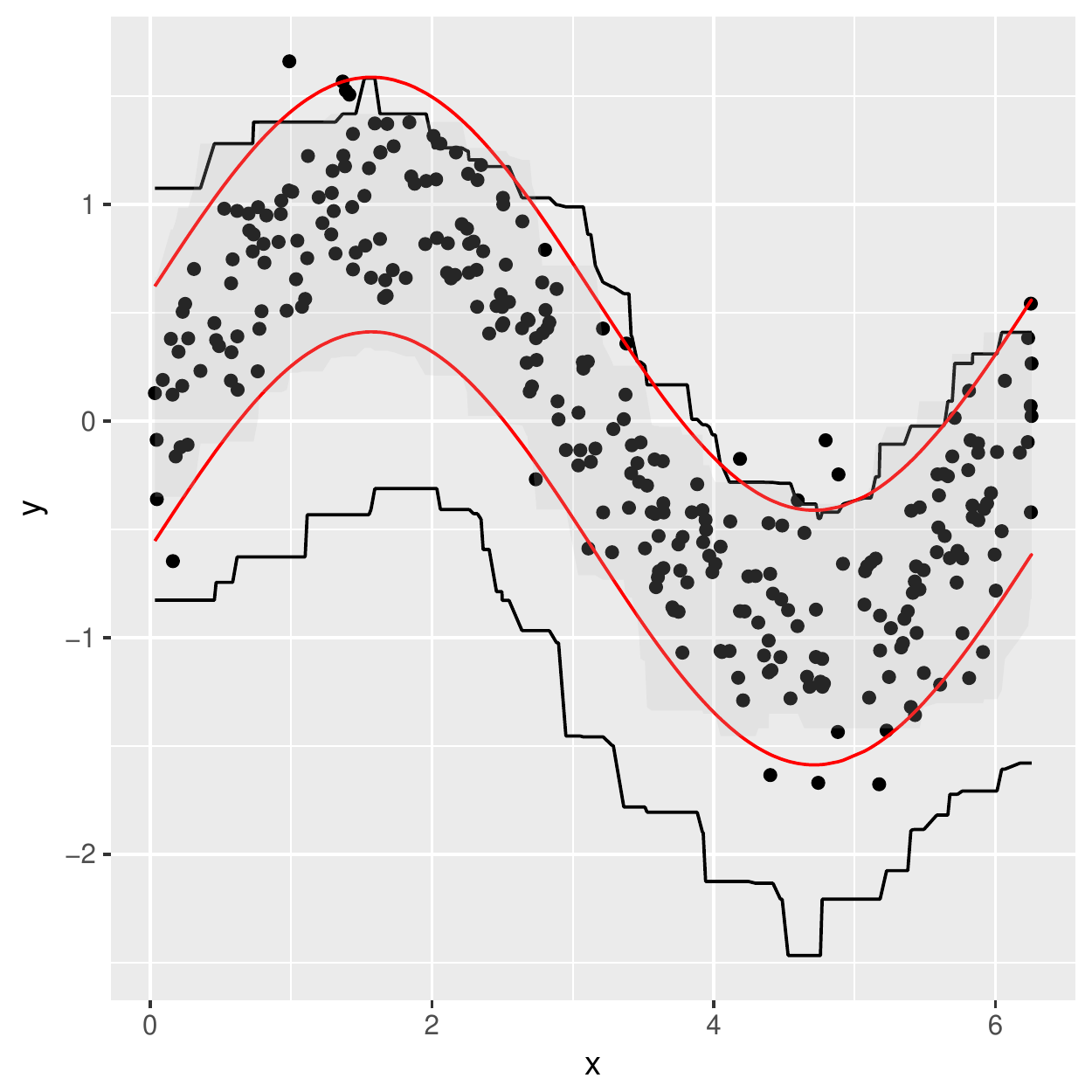}
        \caption{grf}
    \end{subfigure}
    
    \caption{Prediction intervals of the univariate censored since model. We observe that in all of the cases, our method \textit{crf} and \textit{qrf-oracle} give the coverage closest to $95\%$. Both \textit{qrf} and \textit{grf} perform much worse on predicting lower quantiles. They tend to under-estimate the lower quantiles and hence make the confidence intervals much wider than the true ones.}
    \label{fig:sine_ci_plot}
\end{figure}

    

\subsection{One-dimensional Sine-curve Model}

\begin{figure}[!htb]
    \centering
    \includegraphics[width=\columnwidth]{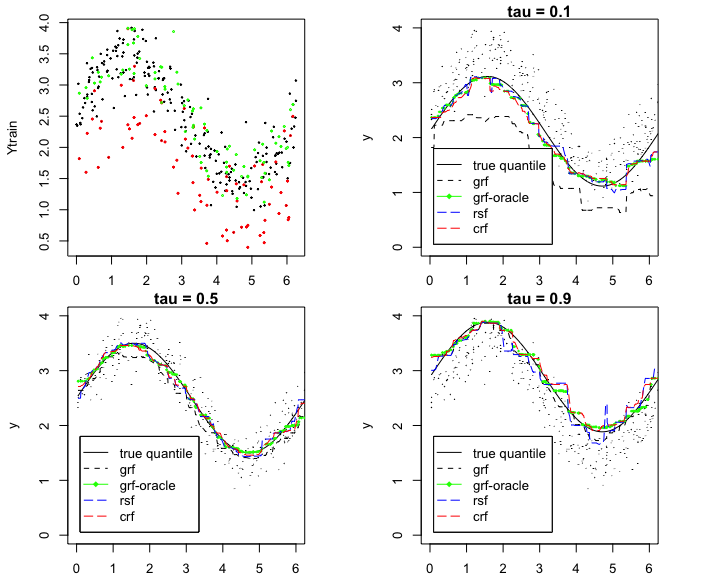}
    \caption{One-dimensional Sine model results.}
    \label{fig:one_dim_sine}
\end{figure}

Since the proposed method \textit{crf} is nonparametric and does not rely on any parametric assumption, it can be used to estimate quantiles for any general model $T = f(X) + \epsilon$. Hence we set $f(x) = \sin(x)$ and
\begin{equation*}
    T = 2.5 + \sin(X) + \epsilon
\end{equation*}
where $X \sim \textrm{Unif}(0,2 \pi)$ and $\epsilon \sim \mathcal{N}(0, 0.3^2)$. The censoring variable $C \sim 1 + \sin(X) + \textrm{Exp}(\lambda = 0.2)$ depends on the covariates, and the censoring level is about $25\%$. The results are in Figure \ref{fig:one_dim_sine}. 

Again, the proposed model \textit{crf} produces almost identical quantile predictions compared with \textit{grf-oracle}. Especially when $\tau=0.1$, the $grf$ result (blue dotted curve) severely deviates from the true quantile, while \textit{crf} still predicts the correct quantile and performs as good as the oracle \textit{grf-oracle}.

\clearpage

\begin{figure*}[!htb]
    \centering
    \vspace{-0.05in}
    \begin{subfigure}[b]{0.30\linewidth}
        \includegraphics[width=\textwidth]{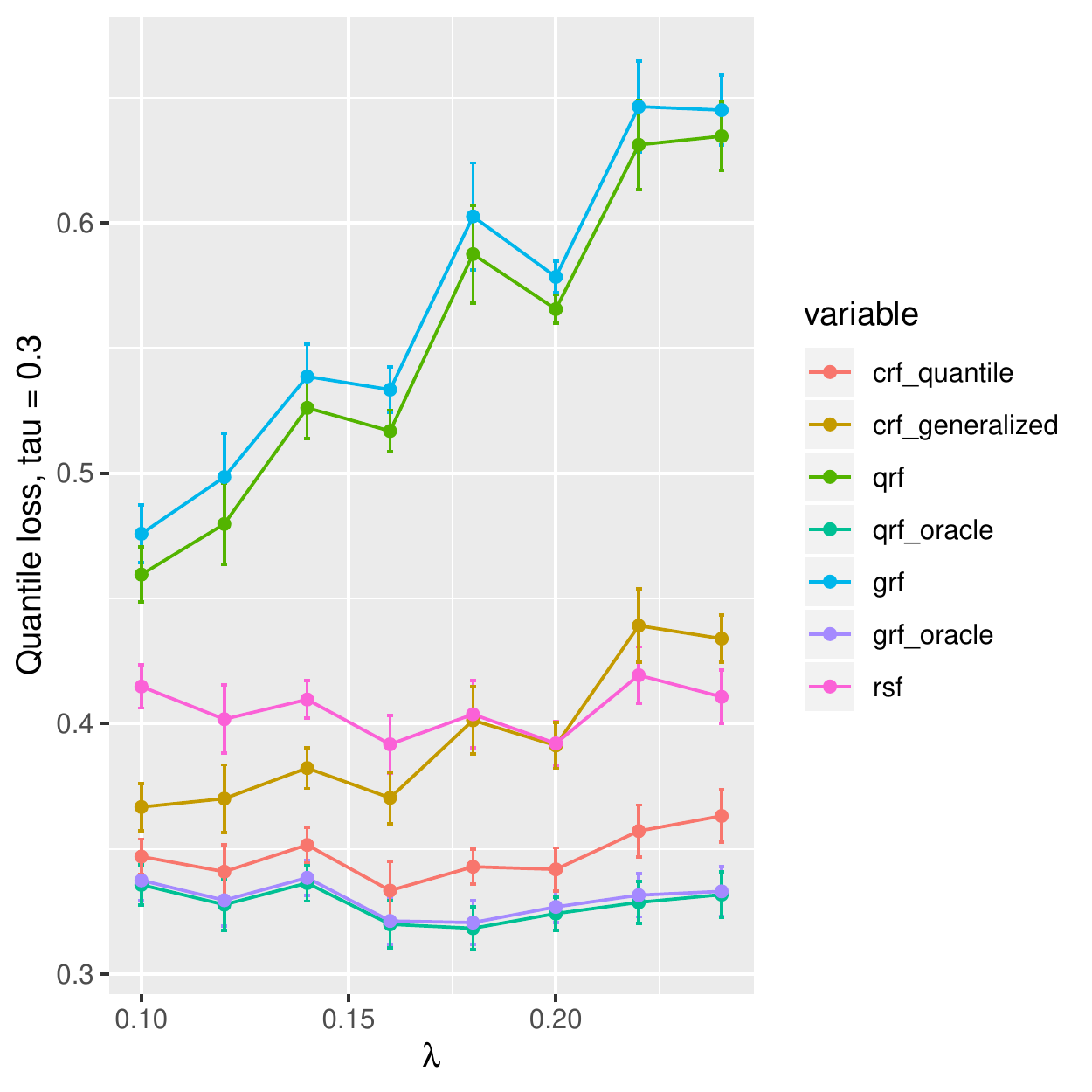}
        
    \end{subfigure}
    ~
    \begin{subfigure}[b]{0.30\linewidth}
        \includegraphics[width=\textwidth]{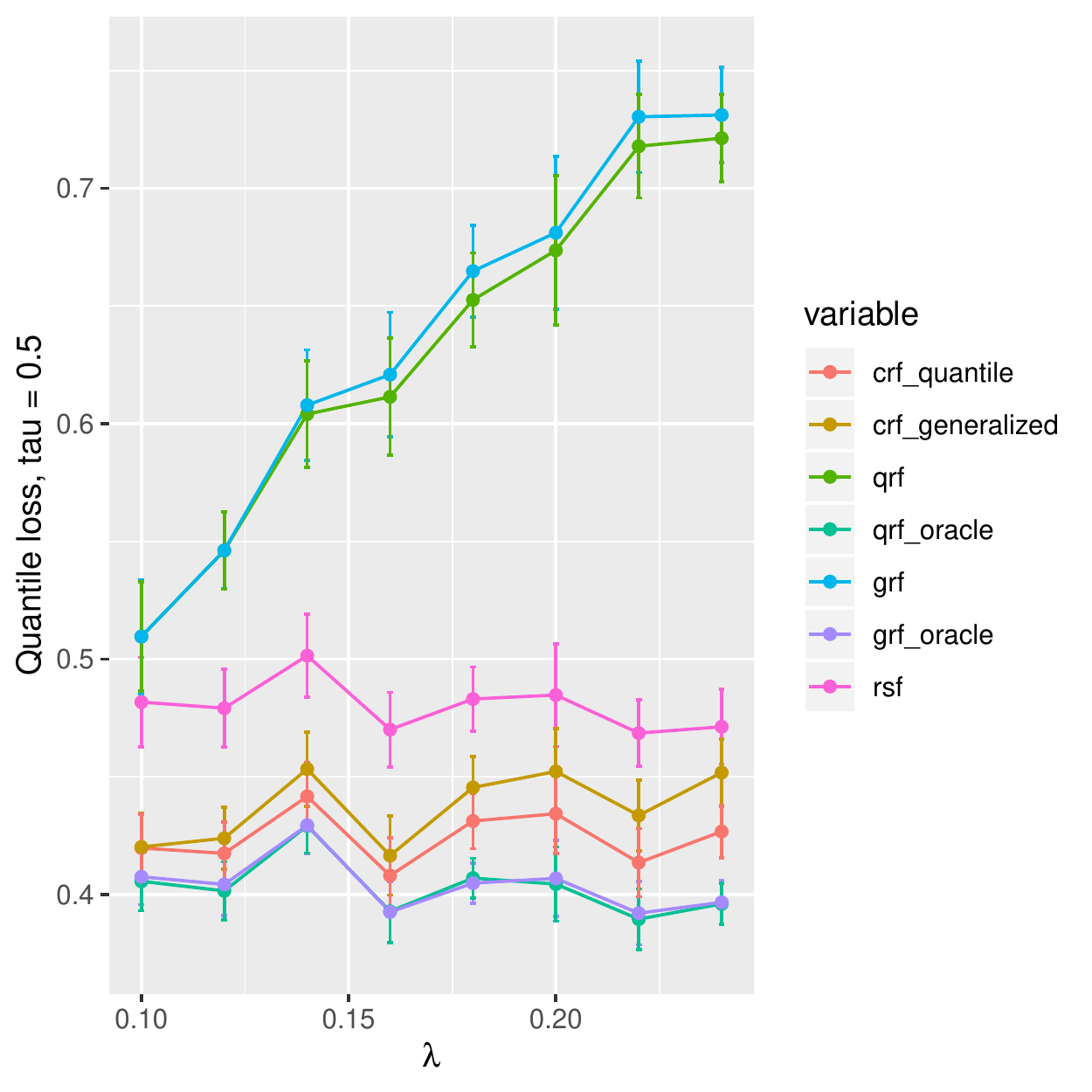}
        
    \end{subfigure}
    ~
    \begin{subfigure}[b]{0.30\linewidth}
        \includegraphics[width=\textwidth]{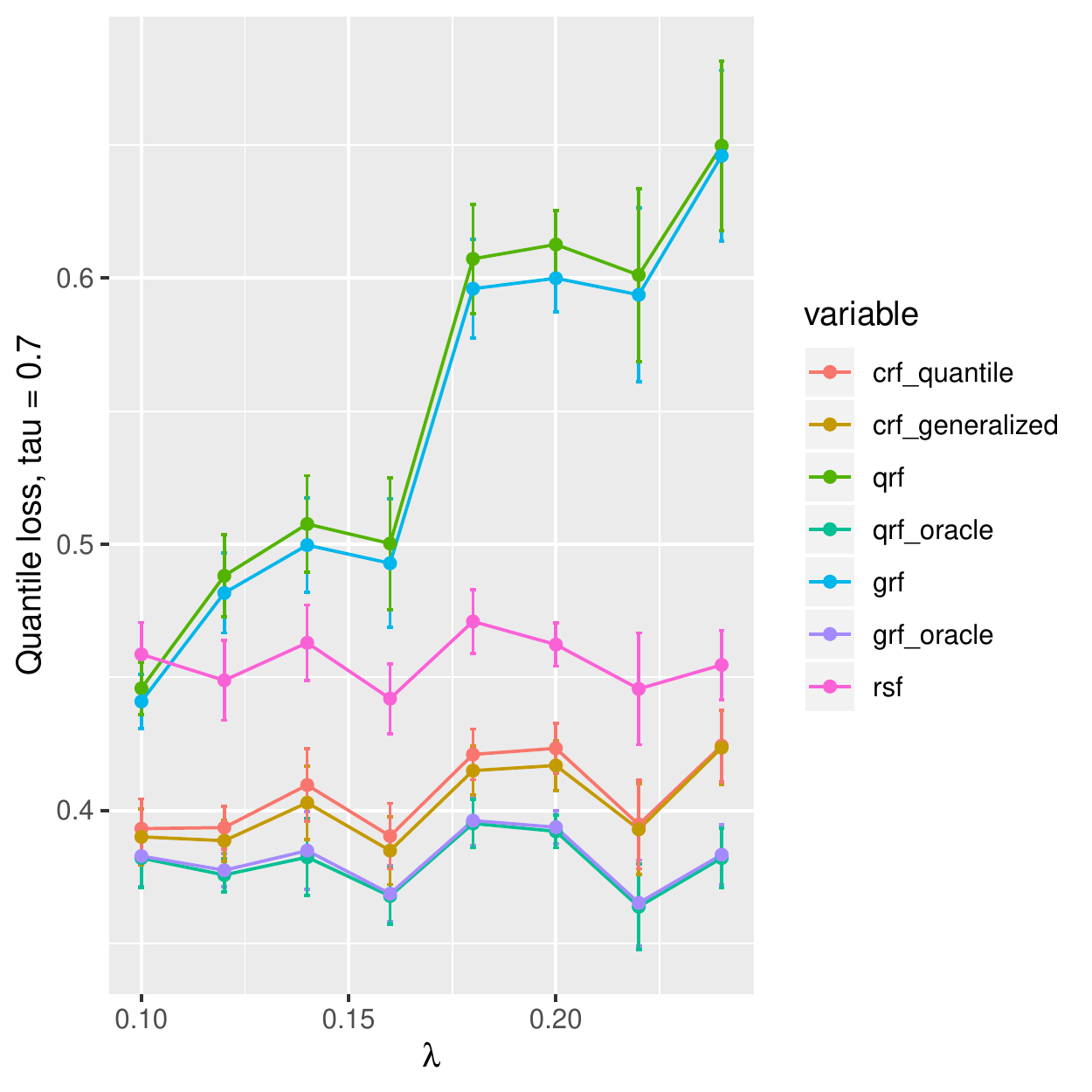}
        
    \end{subfigure}
    
    \begin{subfigure}[b]{0.30\linewidth}
        \includegraphics[width=\textwidth]{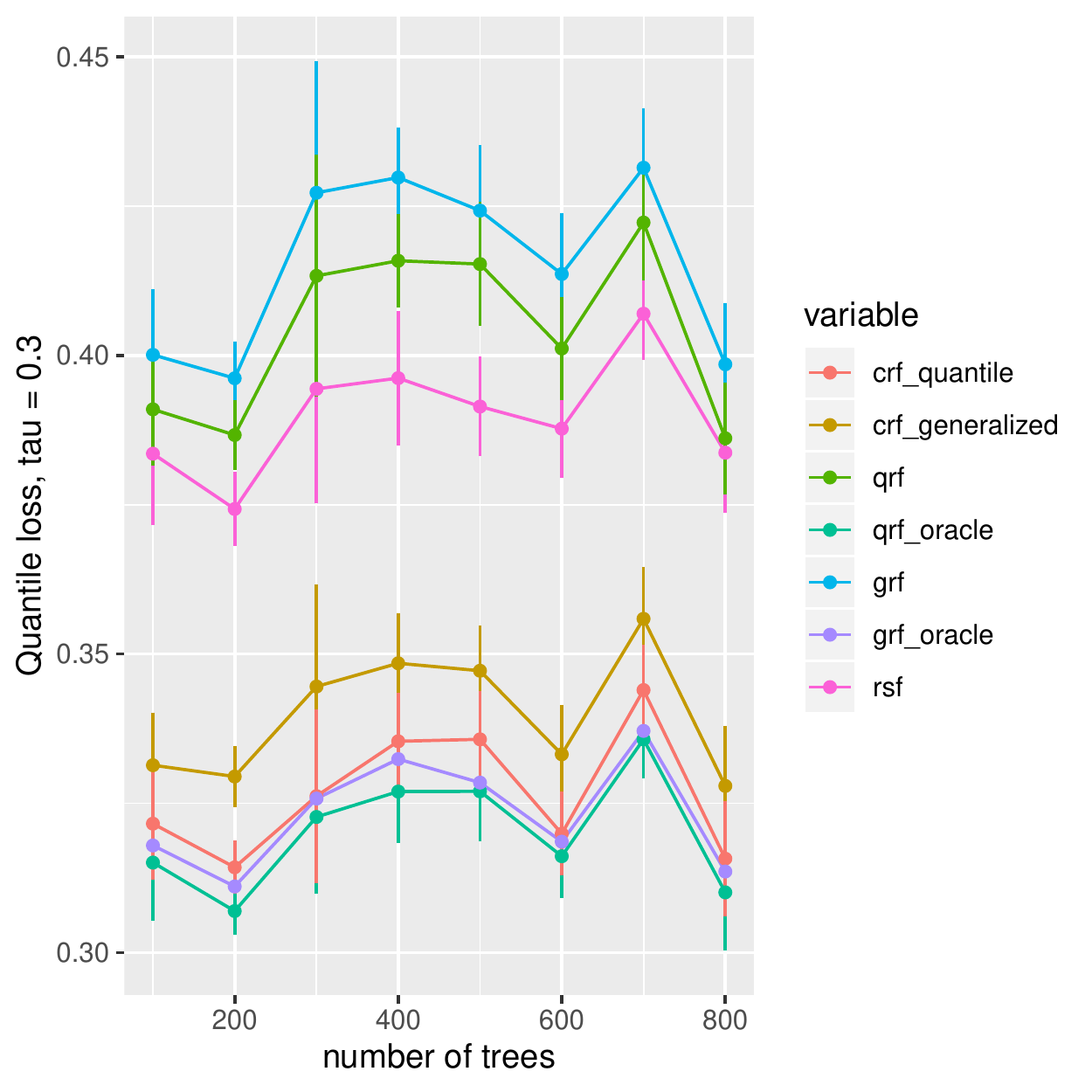}
        
    \end{subfigure}
    ~
    \begin{subfigure}[b]{0.30\linewidth}
        \includegraphics[width=\textwidth]{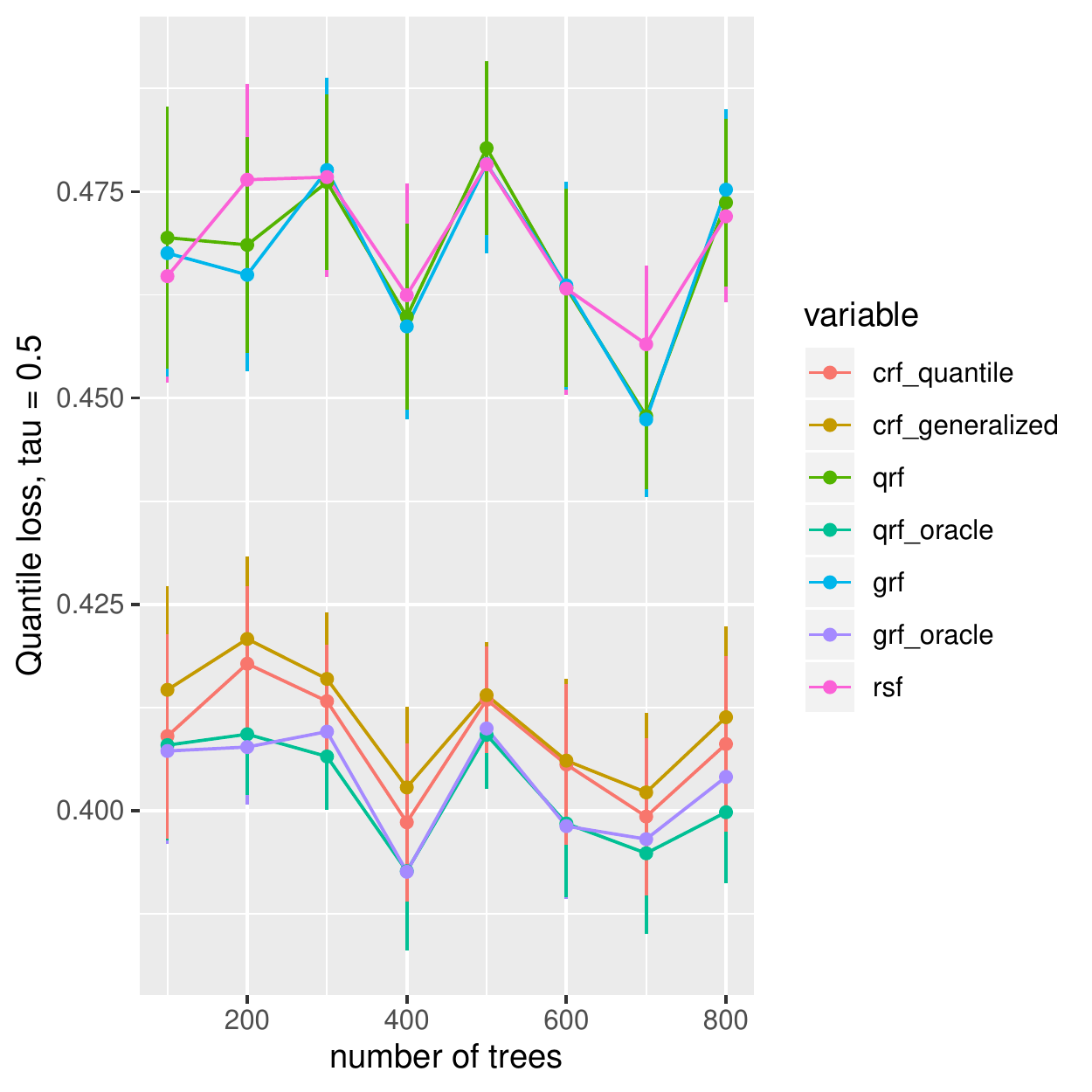}
        
    \end{subfigure}
    ~
    \begin{subfigure}[b]{0.30\linewidth}
        \includegraphics[width=\textwidth]{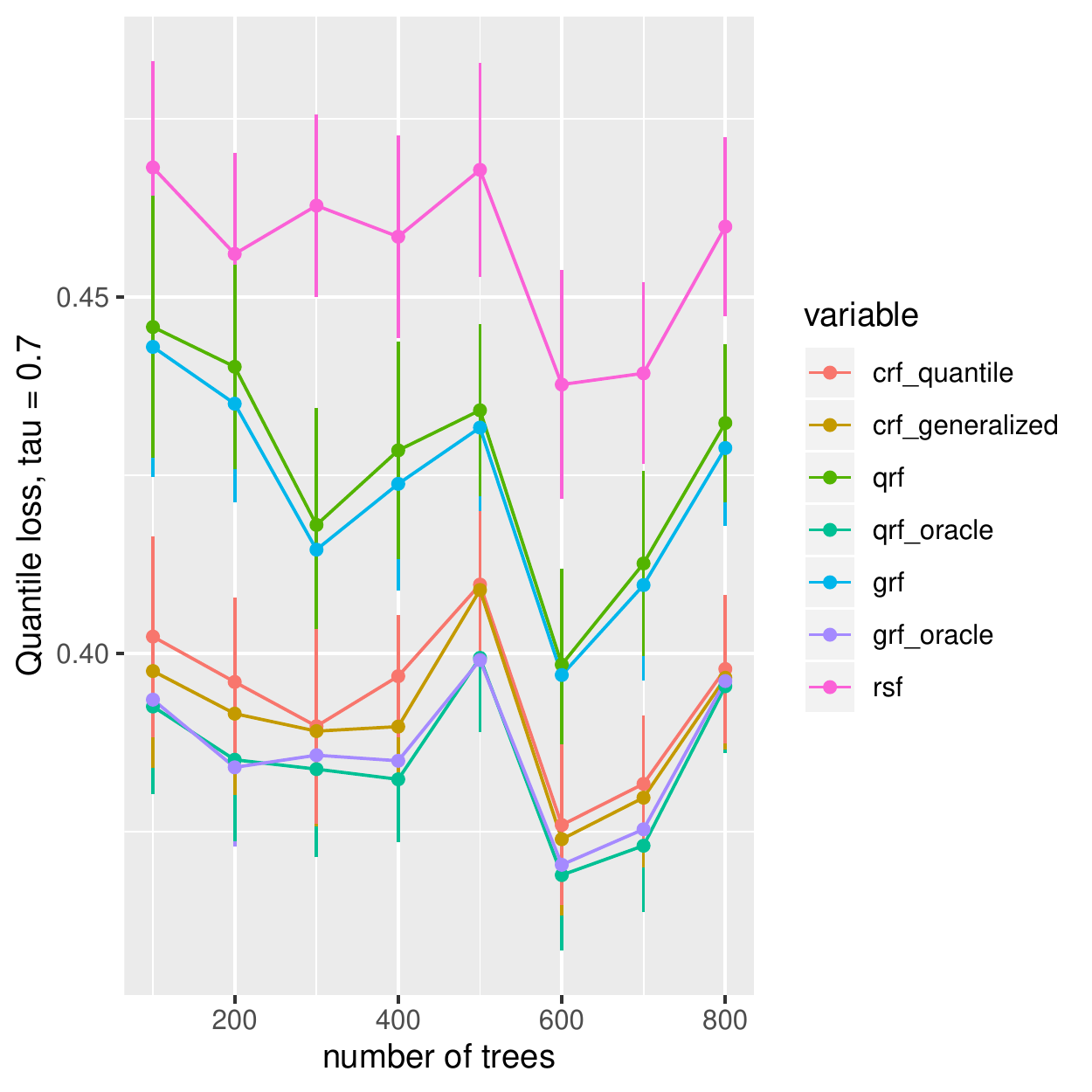}
        
    \end{subfigure}
    
    \vspace{-0.05in}
\end{figure*}
\end{document}